\documentclass{article} 
\usepackage{iclr2026_conference,times}

\usepackage{amsmath,amsfonts,bm}

\def\eqref#1{equation~\ref{#1}}

\def\1{\bm{1}}

\DeclareMathAlphabet{\mathsfit}{\encodingdefault}{\sfdefault}{m}{sl}
\SetMathAlphabet{\mathsfit}{bold}{\encodingdefault}{\sfdefault}{bx}{n}

\usepackage{hyperref}
\usepackage{url}
\usepackage{graphicx}
\usepackage{hyperref}
\usepackage{mathtools}
\usepackage{todonotes}
\usepackage{wrapfig}
\usepackage{algorithm}
\usepackage{algpseudocode}
\usepackage{amsthm}
\usepackage{multicol}
\usepackage{multirow}
\usepackage{booktabs} 
\usepackage{subcaption}

\usepackage{tikz}
\usepackage{pgfplots}
\usepgfplotslibrary{groupplots}
\pgfplotsset{compat=1.18} 

\pgfplotsset{compat=1.18}

\usepackage[utf8]{inputenc} 
\usepackage[T1]{fontenc}    
\usepackage{hyperref}       
\usepackage{url}            
\usepackage{booktabs}       
\usepackage{amsfonts}       
\usepackage{nicefrac}       
\usepackage{microtype}      
\usepackage{xcolor}   
\usepackage{colortbl}
\usepackage{amssymb}
\usepackage{pifont}
\usepackage[most]{tcolorbox}
\usepackage{hyperref}
\usepackage{mathtools}
\usepackage{todonotes}
\usepackage{wrapfig}
\usepackage{algorithm}
\usepackage{algpseudocode}
\usepackage[capitalise,nameinlink]{cleveref}

\usepackage{xcolor}

\definecolor{negcol}{HTML}{B22222} 

\definecolor{outrageousorange}{rgb}{1.0, 0.43, 0.29}
\hypersetup{
  colorlinks=true,
  linkcolor=[RGB]{218,112,214},   
  citecolor=[rgb]{1,0.43,0.29}    
}
\usepackage{amsthm}
\usepackage{multicol}
\usepackage{multirow}
\usepackage{setspace}
\newtheorem{definition}{Definition}

\newtheorem{theorem}{Theorem}

\newtheorem{proposition}{Proposition}
\newtheorem{corollary}{Corollary}
\newtheorem{lemma}{Lemma}

\newtcolorbox[
auto counter,
crefname={Takeaway}{Takeaways}]%
{takeawaybox}[2][]{
fonttitle=\bfseries,
arc=0pt,outer arc=0pt,
enhanced,
attach boxed title to top left={yshift=-1pt},
boxed title style={arc=0pt,outer arc=0pt},
title=Takeaway \thetcbcounter. #2,#1}

\usepackage{thmtools}

\definecolor{warmorange}{RGB}{233,124,108}
\definecolor{lightblue}{RGB}{161, 204, 248}

\title{The Lattice Representation Hypothesis of Large Language Models }

\author{%
  Bo Xiong \\
  Stanford University, United States \\
  \texttt{xiongbo@stanford.edu} \\
}
\iclrfinalcopy 
\begin{document}

\maketitle

\pagestyle{fancy}
\thispagestyle{fancy}
\fancyhf{}
\fancyhead[L]{Published as a conference paper at ICLR 2026}
\fancyfoot[C]{\thepage}

\begin{abstract}
We propose the \emph{Lattice Representation Hypothesis} of large language models: a symbolic backbone that grounds conceptual hierarchies and logical operations in embedding geometry. 
Our framework unifies the \textit{Linear Representation Hypothesis} with \textit{Formal Concept Analysis (FCA)}, showing that linear attribute directions with separating thresholds induce a concept lattice via half-space intersections. 
This geometry enables symbolic reasoning through geometric meet (\textit{intersection}) and join (\textit{union}) operations, and admits a canonical form when attribute directions are linearly independent. 
Experiments on WordNet provide empirical evidence that LLM embeddings encode concept lattices and their logical structure, revealing a principled bridge between continuous geometry and symbolic abstraction. Datasets and code are open available. \footnote{\url{https://github.com/xiongbo010/lattice-representation-hypothesis}}
\end{abstract}

\section{Introduction}

Large language models (LLMs) \citep{achiam2023gpt,grattafiori2024llama,team2024gemma} are surprisingly effective in capturing conceptual knowledge \citep{DBLP:conf/emnlp/PetroniRRLBWM19,DBLP:conf/acl/WuJJXT23,DBLP:conf/acl/LinN22,xiongtokens} and performing logical reasoning, capabilities traditionally associated with symbolic AI. 
Yet, it remains fundamentally unclear how exactly such conceptual knowledge, including concepts, hierarchies, and their logical semantics, is encoded within the continuous geometry of LLM representation spaces. Unlocking this hidden geometry is crucial not only for interpreting how LLMS representing symbolic knowledge, but also for reliably controlling and steering their inference behavior \citep{DBLP:conf/acl/HanXL0SJAJ24}, a fundamental step for advancing AI safety.

To understand concept representations in LLMs, a promising direction is the \textit{Linear Representation Hypothesis} \citep{DBLP:conf/naacl/MikolovYZ13,park2024geometry,park2023linear,DBLP:conf/iclr/GurneeT24}, which posits that semantic features and concepts are encoded as linear directions or subspaces in a model’s embedding space. This idea, rooted in early work on word embeddings \citep{DBLP:conf/emnlp/PenningtonSM14}, has since been extended to modern LLMs, where such directions can be interpreted as embedding difference, logistic probing, or steering vectors in different contexts \citep{DBLP:conf/iclr/GurneeT24,DBLP:conf/blackboxnlp/NandaLW23,zhaobeyond}. 
Park et al. \citep{park2023linear} unifies them through \emph{causal inner product} that respects the semantic structure of concepts in the sense that causally separable concepts are represented by orthogonal vectors. However, these works mainly focus on exploring the existence of the linearity of (binary) concepts, but offer limited insights for interpreting compositional or set-theoretic semantics such as concept inclusion, concept intersection, and union, which lies at the heart of symbolic abstraction. 

Recently, Park et al. \citep{park2024geometry} extended the Linear Representation Hypothesis to formalize \emph{categorical concepts} as geometric regions like \emph{polytopes} in the representation space, and show that semantic hierarchy corresponds to orthogonality. However, they model concepts purely in terms of their extensions, that is, as sets of tokens or objects that fall under the category, such as $Y(\texttt{animal}) = \{\texttt{predator}, \texttt{bird}, \texttt{dog}, \ldots\}$. While this extensional view is useful for evaluating membership, it overlooks their intensional nature, i.e., the attributes and relations that ground categories in logic and philosophy, making it difficult to interprete how concepts are related to each others through set-theoretic semantics like concept subsumption, intersection, or union.

We draw inspiration from Formal Concept Analysis (FCA) \citep{ganter2005formal}, a principled and philosophy-inspired framework that defines concepts through both their instances and their attributes. In FCA, each concept is represented as a pair: an extent (the set of objects) and an intent (the set of shared attributes). For example, the concept bird may be defined by attributes such as can fly, has feathers, and lays eggs, while eagle (a bird of prey) refines this category by denoting a subset of birds with additional features such as can hunt.
As Figure \ref{fig:llm_conceptual_structure} shows, unlike extensional view defining concepts as instances, such dual intent-extent view defines concepts as convex regions bounded by their attributes defining them. For example, the concept \textit{bird} may be defined by attributes like \textit{can fly}, \textit{has feathers}, and \textit{lays eggs}, while \textit{eagle} (a bird of prey) refines this category by denoting a subset of birds with additional features such as \textit{can hunt}. 
This dual formulation naturally induces a concept lattice, because every pair of concepts can be ordered by inclusion of their extents (or, dually, their intents), and any two concepts have a well-defined intersection (their common subconcept) and union (their common superconcept), corresponding to the meet and join operations of the lattice. 

\begin{figure*}
    \centering
    \vspace{-0.2cm}
    \includegraphics[width=\linewidth]{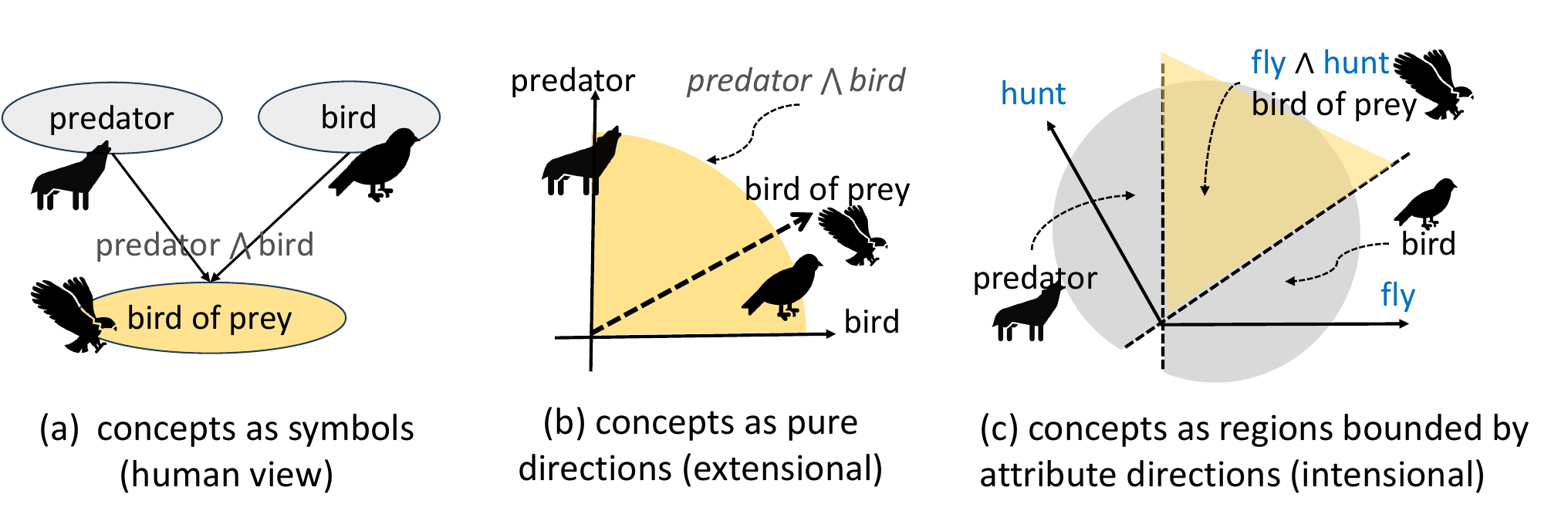}
    \vspace{-.7cm}
   \caption{
   \textbf{How LLMs encode conceptual structure.}
(a) Humans represent concepts as symbols and compose them using logical operators. 
(b) Under the standard extensional view, LLMs encode concepts as directions, where subsumption is interpreted through the relative orientation of vectors. 
(c) Under our intensional view, a concept is represented as the intersection of half-spaces defined by its attributes, and compositional semantics emerges through region intersection and union.
   }
    \label{fig:llm_conceptual_structure}
    \vspace{-0.4cm}
\end{figure*}

In this work, we propose the \emph{Lattice Representation Hypothesis} that unifies two perspectives on concept representation: the \emph{Linear Representation Hypothesis}, which views semantics as directions in embedding geometry, and \emph{Formal Concept Analysis}, which formalizes concepts through the incidence relation between objects and attributes. 
\textbf{Our key insight is that these views coincide, revealing a hidden lattice geometry in LLMs:} attribute directions correspond to FCA intents, object embeddings to extents, and symbolic abstractions such as subsumption, intersection, and union emerge naturally from the induced closure structure. 
Building on this connection, we formalize a half-space model of concepts and a projection-based notion of concept inclusion that together recover a lattice geometry from LLM representations. 
Our contributions are threefold: (i) a theoretical framework linking Linear Representation Hypothesis to FCA via half-space intersections, (ii) a soft inclusion measure and concept algebra (meet/join) defined directly on embeddings, and (iii) empirical evidence on WordNet sub-hierarchies showing that LLM embeddings encode concept lattices, enabling coherent generalization (join) and refinement (meet). 
These results demonstrate that LLMs implicitly organize conceptual knowledge into a lattice geometry, providing a symbolic backbone for interpretability.

\section{Preliminaries}

\subsection{Linear Representation Hypothesis in LLMs}
We consider the autoregressive family of LLMs that predict the next tokens given its context.
\begin{definition}[Large Language Model]
\label{sec:causal-inner-product}
An LLM defines a probability distribution over the next token \( y \) given a context \( x \) via the softmax function $\Pr(y \mid x) \propto \exp\left(\lambda(x)^\top \gamma(y)\right),$ where \( \lambda : \mathcal{X} \to \Lambda \simeq \mathbb{R}^d \) maps the input context \( x \) to a context embedding vector \( \lambda(x) \), and \( \gamma : \mathcal{V} \to \Gamma \simeq \mathbb{R}^d \) maps each vocabulary token \( y \) to its unembedding vector \( \gamma(y) \).
\end{definition}
This definition involves a \emph{context embedding space} \( \Lambda \) and a \emph{token unembedding space} \( \Gamma \), which together define the geometry of the softmax distribution. These two spaces can be unified via the \emph{causal inner product} \citep{park2023linear}. Specifically, there exists an invertible matrix \( A \in \mathbb{R}^{d \times d} \) and a constant vector \( \bar{\gamma}_0 \in \mathbb{R}^d \) such that defining $g(y) \coloneqq A\left(\gamma(y) - \bar{\gamma}_0\right)$ and $\ell(x) \coloneqq A^{-\top} \lambda(x),$
reparameterizes token and context embeddings into a shared semantic space, where their interaction is captured by the Euclidean inner product \( \ell(x)^\top g(y) \). This transformation preserves the model’s output distribution, as the softmax \( \Pr(y \mid x) \) remains invariant under any choice of $A$ and $\bar{\gamma}_0$. 

In this unified space, Linear Representation Hypothesis states that certain semantic attributes correspond to specific directions. There is an explicit definition for binary attributes.
\begin{definition}[Linear representation of a binary attribute/concept \citep{park2023linear}]
\label{def:linear-rep}
A vector \( \bar{\ell}_m \in \mathbb{R}^d \) is said to linearly represent a binary attribute \( m \in \{0,1\} \) if, for all context embeddings \( \ell \in \mathbb{R}^d \), all scalars \( \alpha > 0 \), and all attributes \( z \ne m \) that are causally separable from \( m \), the following conditions hold:
\begin{itemize}
    \item \textbf{Attribute activation:} \( \Pr(m = 1 \mid \ell + \alpha \bar{\ell}_m) > \Pr(m = 1 \mid \ell) \);
    \item \textbf{Causal selectivity:} \( \Pr(z \mid \ell + \alpha \bar{\ell}_m) = \Pr(z \mid \ell) \).
\end{itemize}
\end{definition}

In other words, moving in the direction \( \bar{\ell}_m \) increases the likelihood of the attribute \( m \) without affecting any causally unrelated attributes. The direction merely encodes the semantic direction of the attribute, not its strength, as any positive scalar multiple \( \alpha \bar{\ell}_m \) has the same qualitative effect.


\subsection{Formal Concept Analysis (FCA)}
FCA \citep{ganter2003formal} is a mathematical framework for modeling concepts as structured relationships between objects and their attributes. Unlike extensional views that define concepts simply as sets of objects (as in \citep{DBLP:journals/corr/abs-2410-12101}), FCA treats a concept as an intensional abstraction: a set of objects characterized by a common set of attributes. 
This duality between objects and attributes allows FCA to capture both the semantic content of a concept and its compositional structure.
FCA begins with a binary relation between objects and attributes, formalized as a \emph{formal context}:

\begin{definition}[Formal context]
\label{def:formalcontext}
A formal context is a triple $(G, M, I)$, where $G$ is a finite set of objects, $M$ is a finite set of attributes, and $I \subseteq G \times M$ is a binary relation (called the \emph{incidence relation}) such that $(g, m) \in I$ indicates that object $g \in G$ possesses attribute $m \in M$.
\end{definition}

From this, FCA defines concepts as maximal sets of objects and attributes that are mutually consistent:


\begin{definition}[Formal concept]
\label{def:formalconcept}
Given a formal context $(G, M, I)$, consider a pair $(A, B)$ with 
$A \subseteq G$ and $B \subseteq M$. 
Define the Galois connections as
$A' := \{ m \in M \mid \forall g \in A,\ (g, m) \in I \}, \
B' := \{ g \in G \mid \forall m \in B,\ (g, m) \in I \}.$
The pair $(A, B)$ is called a \emph{formal concept} if and only if 
$A' = B$ and $B' = A$, where $A$ is the \emph{extent} and $B$ is the \emph{intent}.
\end{definition}

The set of formal concepts is partially ordered by inclusion of extents (i.e., \( A_1 \subseteq A_2 \)) or equivalently by reverse inclusion of intents (i.e., \( B_2 \subseteq B_1 \)). 

\begin{definition}[Concept lattice]
\label{def:conceptlattice}
Let $(A_1, B_1)$ and $(A_2, B_2)$ be formal concepts. Then $(A_1, B_1) \leq_{C} (A_2, B_2) \quad \text{if and only if} \quad A_1 \subseteq A_2 \quad (\text{equivalently, } B_2 \subseteq B_1)$, where $\leq_{C}$ denotes the partial order relationship. 
Under the partial order $\leq_{C}$, the set of all formal concepts forms a complete lattice: every subset concepts have a greatest lower bound (\emph{meet}) and a least upper bound (\emph{join}).
\end{definition}

\section{The Lattice Representation Geometry in LLMs}

We now establish a connection between the Linear Representation Hypothesis and Formal Concept Analysis (FCA), showing that the linear geometry of LLM embeddings gives rise to a concept lattice. 
The key idea is that each binary attribute can be modeled as a direction in the unified space, with membership approximated by a thresholded inner product. 
This naturally induces a binary object–attribute relation, from which a formal context and concept lattice can be constructed. We refer to this construction as the \emph{lattice geometry} of LLMs. Fig. \ref{fig:bigmap} illustrates such correspondence.  

\begin{figure*}
    \centering
    \vspace{-0.2cm}
    \includegraphics[width=\linewidth]{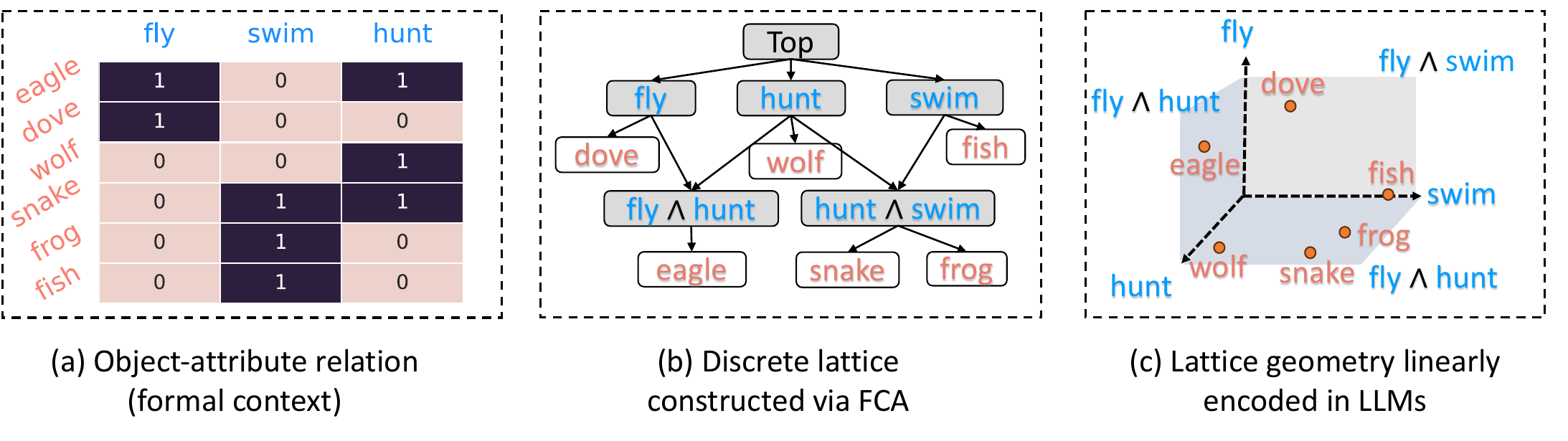}
    \vspace{-.6cm}
   \caption{
   \textbf{How FCA connects to the linear lattice geometry of LLMs.}
(a) A formal context describing which \textbf{objects} satisfy which \textbf{attributes}. 
(b) The discrete concept lattice constructed exactly from this formal context. (c) The corresponding lattice geometry encoded in LLM embeddings, where each attribute is represented as a linear direction and each object as a point, and concept composition (meet and join) emerges as intersection or union of half-spaces.}
    \label{fig:bigmap}
    \vspace{-0.5cm}
\end{figure*}

\subsection{From Linear to Lattice Geometry}

\textbf{Geometric interpretation of attribute membership.} 
Under the Linear Representation Hypothesis, semantic attributes are encoded as directions while objects (tokens or contexts) are encoded as vectors. Let \( \bar{\ell}_m \in \mathbb{R}^d \) denote the direction for attribute \( m \), and \( \mathbf{v}_g \in \mathbb{R}^d \) the embedding of object \( g \). 
The extent to which \( g \) possesses \( m \) can be estimated by the projection \( \mathbf{v}_g \cdot \bar{\ell}_m \).  In the idealized case, there exists a threshold \( \tau_m \) such that $m(g) = 1 \iff \mathbf{v}_g \cdot \bar{\ell}_m \ge \tau_m$. However, perfect separation rarely holds in practice, so we define a \emph{soft incidence relation}:
\begin{equation}
    P_\alpha(m(g) = 1) := \sigma\!\left( \alpha \cdot (\mathbf{v}_g \cdot \bar{\ell}_m - \tau_m) \right),
\end{equation}
where \( \alpha > 0 \) is a sharpness parameter controlling the smoothness of the logistic incidence function. A larger $\alpha$ makes the boundary steeper and smaller $\alpha$ makes the boundary smoother. As \( \alpha \to \infty \), this recovers the hard-thresholded case. 
This assigns each object–attribute pair a fuzzy degree of membership, enabling smoother definitions of concepts.

\begin{theorem}[Existence of Lattice Geometry]
\label{thm:existence-lattice}
Let \( G \) be a finite set of objects and \( M \) a finite set of attributes. 
Let \( V = \{ \mathbf{v}_g \in \mathbb{R}^d \mid g \in G \} \) be object embeddings and \( \mathcal{D} = \{ \bar{\ell}_m \in \mathbb{R}^d \mid m \in M \} \) attribute directions. 
Suppose for each \( m \in M \) there exists a threshold \( \tau_m \in \mathbb{R} \) such that membership is modeled by the soft incidence function above. 
For any confidence level \( \delta \in (0, 1) \), define the binary incidence relation
\[
I_\delta := \{(\mathbf{v}_g, \bar{\ell}_m) \mid P_\alpha(m(g)=1) \ge \delta \}.
\]
Then the induced concept set
\[
\mathcal{F}_\delta = \Big\{ (X, Y) \,\Big|\,
X = \{ \mathbf{v} \in V \mid \forall \bar{\ell} \in Y, (\mathbf{v}, \bar{\ell}) \in I_\delta \},\,
Y = \{ \bar{\ell} \in \mathcal{D} \mid \forall \mathbf{v} \in X, (\mathbf{v}, \bar{\ell}) \in I_\delta \}
\Big\}
\]
satisfies: (i) closure under the Galois connection, and (ii) forms a complete lattice under extent inclusion (equivalently, reverse intent inclusion). Proof is detailed in Appendix B.
\end{theorem}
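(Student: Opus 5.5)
The plan is to reduce the statement to the fundamental theorem of Formal Concept Analysis. The soft incidence and the confidence level $\delta$ only serve to produce an ordinary binary relation.

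\textbf{Step 1: build a formal context.} Since $\sigma$ is strictly increasing and $\alpha>0$, for $\delta\in(0,1)$ the condition $P_\alpha(m(g)=1)\ge\delta$ is equivalent to
\[
\mathbf{v}_g\cdot\bar{\ell}_m \ge \tau_m + \alpha^{-1}\log\frac{\delta}{1-\delta}.
\]
So $I_\delta$ is a well-defined subset of $V\times\mathcal{D}$, and geometrically it is membership in a half-space. We then regard $(V,\mathcal{D},I_\delta)$ as a formal context in the sense of Definition~\ref{def:formalcontext}; it is finite because $G$ and $M$ are. One caveat: distinct objects might share an embedding, or distinct attributes a direction. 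To avoid any ambiguity we index by $G$ and $M$ directly, i.e.\ we use $(G,M,I_\delta)$, which yields the same lattice up to the obvious identification.

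\textbf{Step 2: closure, part (i).} Define the derivation operators $X\mapsto X'$ and $Y\mapsto Y'$ exactly as in Definition~\ref{def:formalconcept}, using $I_\delta$. We check the standard Galois facts directly from the definitions: both maps are antitone, $X\subseteq X''$, $Y\subseteq Y''$, and hence $X'''=X'$ and $Y'''=Y'$. The set $\mathcal{F}_\delta$ is by definition the set of pairs with $X=Y'$ and $Y=X'$. For such a pair, $X''=(X')'=Y'=X$ and $Y''=Y$, so both components are closed under the Galois connection. This gives (i). Conversely, for any $Y_0\subseteq\mathcal{D}$ the pair $(Y_0',Y_0'')$ lies in $\mathcal{F}_\delta$, so $\mathcal{F}_\delta$ is nonempty.

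\textbf{Step 3: complete lattice, part (ii).} We order $\mathcal{F}_\delta$ by $(X_1,Y_1)\le(X_2,Y_2)$ iff $X_1\subseteq X_2$. This is equivalent to $Y_2\subseteq Y_1$ by antitonicity together with $X_i=Y_i'$ and $Y_i=X_i'$. For an arbitrary family $\{(X_t,Y_t)\}_{t\in T}$ we exhibit
\[
\bigwedge_t (X_t,Y_t)=\Big(\bigcap_t X_t,\ \big(\bigcup_t Y_t\big)''\Big),\qquad
\bigvee_t (X_t,Y_t)=\Big(\big(\bigcup_t X_t\big)'',\ \bigcap_t Y_t\Big).
\]
The key identity is $\bigcap_t Y_t' = (\bigcup_t Y_t)'$, which follows immediately from the "for all $m\in Y$" form of the derivation operator. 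It shows that $\bigcap_t X_t$ is an extent, that its intent is $(\bigcup_t Y_t)''$, and that it is the greatest extent contained in every $X_t$. The join case is dual. The empty family is covered by the top element $(V,V')$ and the bottom element $(\mathcal{D}',\mathcal{D})$. Alternatively, because $\mathcal{F}_\delta$ is finite and has a top element, it suffices to show closure under binary meets: a finite meet-semilattice with a top is a complete lattice.

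\textbf{Main obstacle.} There is no real mathematical difficulty here; the content is the standard FCA theorem. The only care needed is in Step 1: confirming that thresholding the logistic function yields an honest binary relation for every $\delta\in(0,1)$, and handling possible coincidences among embeddings. I would also remark that no assumption on the directions (such as linear independence) is needed for this existence result.
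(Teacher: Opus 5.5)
Your proposal is correct and follows the paper's own proof essentially step for step. Both reindex the thresholded relation as a crisp formal context on $G\times M$, derive the antitone Galois connection and its closure properties, characterize concepts as closed pairs, and exhibit the standard meet and join formulas for arbitrary families. Your identity $\bigcap_t Y_t' = (\bigcup_t Y_t)'$ also makes explicit that $\bigcap_t X_t$ is itself an extent, a point the paper's meet argument leaves implicit.
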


\noindent
Thus, when attributes are encoded as thresholded linear projections, a symbolic concept lattice can be recovered from embedding geometry, capturing semantic abstraction through graded boundaries.

\paragraph{Canonical representation under soft incidence.}
Theorem~\ref{thm:existence-lattice} allows arbitrary thresholds. We now show that under mild conditions, these thresholds can be absorbed into a global shift of embeddings, yielding a canonical origin-passing form:

\begin{proposition}[Canonical representation]
\label{prop:canonicalization}
Let each attribute \( m_i \in M \) be defined by direction \( \mathbf{d}_i \) and threshold \( \tau_i \). 
Let \( D \in \mathbb{R}^{k \times d} \) be the matrix with rows \( \mathbf{d}_i^\top \), and \( \boldsymbol{\tau} = (\tau_1,\dots,\tau_k) \). 
If there exists \( \mathbf{c} \in \mathbb{R}^d \) such that \( D\mathbf{c} = \boldsymbol{\tau} \), then
\begin{equation}
    \sigma\!\left( \alpha(\mathbf{v}_g \cdot \mathbf{d}_i - \tau_i) \right) 
= \sigma\!\left( \alpha((\mathbf{v}_g - \mathbf{c}) \cdot \mathbf{d}_i) \right) 
\quad \forall g,i.
\end{equation}
\end{proposition}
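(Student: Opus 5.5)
The plan is a direct algebraic verification: it suffices to show that the arguments of $\sigma$ on both sides coincide for every pair $(g,i)$. The hypothesis $D\mathbf{c} = \boldsymbol{\tau}$ is a system of $k$ scalar equations. Reading off its $i$-th row gives $\mathbf{d}_i \cdot \mathbf{c} = \tau_i$ for each $i \in \{1,\dots,k\}$, since the $i$-th row of $D$ is $\mathbf{d}_i^\top$.

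Next, fix an arbitrary object $g$ and attribute index $i$. By bilinearity of the Euclidean inner product,
\[
(\mathbf{v}_g - \mathbf{c}) \cdot \mathbf{d}_i = \mathbf{v}_g \cdot \mathbf{d}_i - \mathbf{c} \cdot \mathbf{d}_i = \mathbf{v}_g \cdot \mathbf{d}_i - \tau_i .
\]
Multiplying by $\alpha > 0$ and applying $\sigma$, which is a function and so returns equal outputs on equal inputs, yields the claimed identity. Since $g$ and $i$ were arbitrary, the identity holds for all $g,i$.

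There is no real obstacle here; the only substantive point is the existence of $\mathbf{c}$, which the statement assumes. It is worth remarking how this connects to the abstract's claim about linear independence. If the directions $\mathbf{d}_1,\dots,\mathbf{d}_k$ are linearly independent, then $D$ has full row rank $k \le d$, so $D\mathbf{c} = \boldsymbol{\tau}$ is solvable for every $\boldsymbol{\tau}$. One explicit solution is $\mathbf{c} = D^\top (DD^\top)^{-1}\boldsymbol{\tau}$, the minimum-norm solution. Consequently the shifted embeddings $\mathbf{v}_g - \mathbf{c}$ realize the same soft incidence, and hence the same $I_\delta$ and the same concept lattice $\mathcal{F}_\delta$ from Theorem~\ref{thm:existence-lattice}, using half-spaces through the origin.
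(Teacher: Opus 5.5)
Your proposal is correct and matches the paper's proof essentially line for line. You read off $\mathbf{d}_i^\top\mathbf{c}=\tau_i$ from the $i$-th row of $D\mathbf{c}=\boldsymbol{\tau}$, expand $(\mathbf{v}_g-\mathbf{c})\cdot\mathbf{d}_i$ by linearity, and apply $\sigma(\alpha\,\cdot)$; your closing remark on linear independence mirrors the paper's Remark (i), and it adds the explicit minimum-norm witness $\mathbf{c}=D^\top(DD^\top)^{-1}\boldsymbol{\tau}$.
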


\noindent
That is, the probabilities remain invariant under the transformation \( \mathbf{v}_g \mapsto \mathbf{v}_g - \mathbf{c} \), reducing the model to a canonical half-space form while preserving the induced lattice. Proof is detailed in Appendix C.

\subsection{Half-space model and concept algebra}
\label{sec:halfspace-geometry}
Under the canonical representation, where all attribute thresholds have been absorbed via a global shift, each attribute defines an origin-passing half-space in the embedding space. A concept composed of multiple attributes can be interpreted geometrically as the intersection of those half-spaces, i.e., the region where all attribute constraints are simultaneously satisfied.

\begin{definition}[Concept as half-space]
\label{def:concept-hard-region}
Let \( M \) be a set of attributes, each represented by a direction \( \mathbf{d}_m \in \mathbb{R}^d \). A concept defined by a subset \( Y \subseteq M \) corresponds to the set of object embeddings that satisfy all associated directional constraints:
\begin{equation}
\mathcal{R}(Y) := \left\{\, \mathbf{v} \in \mathbb{R}^d \;\middle|\; \mathbf{v} \cdot \mathbf{d}_m \ge 0 \text{ for all } m \in Y \,\right\}.
\end{equation}
\end{definition}
\noindent
Geometrically, \( \mathcal{R}(Y) \) is a convex polyhedral cone defined by intersecting origin-passing half-spaces. However, this definition assumes perfect attribute separation, which may not hold in practice due to noise, uncertainty, and overlapping concept boundaries in LLM representations. To address this, we move from a hard region view to a \emph{soft formulation}, where membership is modeled as graded alignment rather than binary inclusion.

\paragraph{Concept representation via normalized projection profiles.}
Given a concept $C$ (e.g., a lexical term) and its associated context embeddings \( \{ \mathbf{v}_1, \dots, \mathbf{v}_n \} \subset \mathbb{R}^d \), we define its semantic representation using the average projection profile over the attribute directions \( \{ \mathbf{d}_m \}_{m \in M} \). For each attribute \( m \in M \), the projection value is
\begin{equation}
    \pi_C(m) := \frac{1}{n} \sum_{i=1}^n \mathbf{v}_i \cdot \mathbf{d}_m.
\end{equation}
The resulting projection vector \( \boldsymbol{\pi}_C \in \mathbb{R}^{|M|} \) encodes a \emph{soft attribute profile} of $C$, reflecting how strongly the concept aligns with each attribute. This can be seen as a continuous analogue of an FCA intent. To ensure comparability across concepts, all projection vectors are $\ell_2$-normalized.

\paragraph{Concept inclusion as region containment.}
With projection profiles in place, we can now define a graded notion of subsumption between two concepts. 
Intuitively, a concept $A$ is included in another concept $B$ if the attributes emphasized by $B$ are also strongly expressed in $A$. 
We capture this by a \emph{soft inclusion score} that evaluates how well $A$’s profile satisfies the attribute activations of $B$: 
\begin{equation}
\text{Inclusion}(A \sqsubseteq B) = 
\frac{\sum_{m \in M} \phi(\pi_B(m)) \cdot \sigma(\pi_A(m))}{\sum_{m \in M} \phi(\pi_B(m))}, 
\quad \text{where} \quad 
\phi(x) = \log(1 + e^x).
\end{equation}
Here, the sigmoid $\sigma(\cdot)$ maps $A$’s projection value to a soft likelihood of satisfying attribute $m$, while the softplus $\phi(\cdot)$ weights attributes according to their salience in $B$. 
This formulation smoothly downweights weakly expressed or inactive attributes in $B$, while strongly positive ones dominate the inclusion score. 
Thus, concept inclusion is modeled not as a strict set-theoretic containment but as a continuous, geometry-driven compatibility measure between attribute profiles.

\paragraph{Concept Meet and Join.} 
We operationalize concept algebra in the half-space model by defining meet and join directly on concept regions $\mathcal{R}(Y)$. Meet is the intersection of regions, while join is the least region subsuming both concepts, approximated by the conic hull of their defining directions.

\begin{definition}[Concept algebra: meet and join]
\label{def:concept-algebra}
Let $\mathcal{R}(Y)$ denote the region associated with a concept defined by attribute set $Y \subseteq M$ (Definition~\ref{def:concept-hard-region}).  
\begin{itemize}
    \item \textbf{Meet (intersection).} For two concepts $A=\mathcal{R}(Y_A)$ and $B=\mathcal{R}(Y_B)$, their meet is the region satisfying all attributes from both sets $A \wedge B := \mathcal{R}(Y_A \cup Y_B).$ Geometrically, this corresponds to intersecting the half-spaces that define $A$ and $B$.
    \item \textbf{Join (union or generalization).} Their join is the least upper bound in the lattice, i.e., the most specific concept region that subsumes both $A$ and $B$. In the half-space model, this corresponds to the minimal region that covers $\mathcal{R}(Y_A)$ and $\mathcal{R}(Y_B)$: $A \vee B := \mathcal{R}(Y_A) \cup \mathcal{R}(Y_B),$ which can be approximated by the conic hull spanned by the attribute directions of $A$ and $B$.
\end{itemize}
\end{definition}

\paragraph{Soft measure of meet/join.}
While Definition~\ref{def:concept-algebra} specifies meet and join geometrically, 
we also require a \emph{graded} way to evaluate how well a concept $C$ 
corresponds to these symbolic operations. 

\noindent\textbf{Soft meet/join profiles.} 
Given two concepts $A,B$ with projection profiles $\pi_A, \pi_B$, 
we define the projection profile of their meet and join using fuzzy 
$t$-norm/co-norm combinations:
\begin{equation}
\pi_{A\wedge B}(m) = \min\{\pi_A(m),\pi_B(m)\}, 
\qquad 
\pi_{A\vee B}(m) = \max\{\pi_A(m),\pi_B(m)\}.
\end{equation}

\noindent\textbf{Degrees of inclusion.} 
The degree to which $C$ is subsumed by the meet or join is
\begin{equation}
\deg(C \sqsubseteq A\wedge B) = \mathrm{Inclusion}(C \sqsubseteq A\wedge B),
\qquad
\deg(C \sqsubseteq A\vee B) = \mathrm{Inclusion}(C \sqsubseteq A\vee B),
\end{equation}
where the inclusion function is as defined in Eq.~(2).

\noindent\textbf{Degrees of equality.} 
Soft equality is defined by symmetrizing inclusion with the harmonic mean:
\begin{equation}
\deg(C = A\wedge B) = 
\frac{2\,\mathrm{Incl}(C\!\sqsubseteq\!A\wedge B)\cdot \mathrm{Incl}(A\wedge B\!\sqsubseteq\!C)}
{\mathrm{Incl}(C\!\sqsubseteq\!A\wedge B)+\mathrm{Incl}(A\wedge B\!\sqsubseteq\!C)},
\end{equation}
\begin{equation}
\deg(C = A\vee B) = 
\frac{2\,\mathrm{Incl}(C\!\sqsubseteq\!A\vee B)\cdot \mathrm{Incl}(A\vee B\!\sqsubseteq\!C)}
{\mathrm{Incl}(C\!\sqsubseteq\!A\vee B)+\mathrm{Incl}(A\vee B\!\sqsubseteq\!C)}.
\end{equation}

\section{Evaluation}
In this section, we empirically evaluate the extent to which the linear structure of LLM embeddings induces a lattice geometry over concepts. We focus on testing the two core assumptions underlying our framework: 
1) \textbf{Existence of half-space model:} How well do the linear representations of binary attributes recover the ground-truth formal context? 
2) \textbf{Existence of lattice geometry in LLMs:} Does the approximated formal context recover a valid partial order set or concept lattice? 

\subsection{Dataset and Experiment Setup}

\noindent\textbf{Dataset construction.}
We construct five object–attribute datasets derived from the WordNet hierarchy. Three datasets correspond to physical domains (\textbf{WN-Animal}, \textbf{WN-Plant}, \textbf{WN-Food}), and two correspond to abstract domains (\textbf{WN-Event}, \textbf{WN-Cognition}). Each dataset represents a distinct semantic domain. Statistics of datasets are shown in Table \ref{tb:datasets} in the Appendix.
For each domain, we extract all concept terms that fall under the corresponding hierarchy using the WordNet \texttt{is\_a} (hypernym) relation. We expand each concept by retrieving its synonyms and hyponyms to ensure lexical coverage and semantic granularity.
Since WordNet does not provide explicit attribute annotations, we use a large language model (GPT-4o) to generate the attribute schema and populate the object–attribute matrix. Specifically, for each category, we first prompt the model to produce a concise set of salient binary attributes relevant to classification within the category (e.g., \texttt{can fly}, \texttt{has fur}, \texttt{lays eggs} for animals). We then use few-shot prompting to annotate each object with a binary attribute vector. This produces a complete binary incidence matrix, which we treat as the ground-truth \textit{formal context} for evaluation.
We use the WordNet \texttt{is\_a} relation to define a symbolic subsumption hierarchy, and treat it as the target concept lattice structure. Using the annotated formal context and corresponding LLM embeddings for objects and attributes, we evaluate whether the geometry of the embedding space can reconstruct the symbolic structure implied by the context.

\paragraph{Object embedding}
Each object \( g \) is represented by aggregating the embeddings of its lexical synonyms. For every WordNet object $g$, we retrieve all lemma names in its synset and treat them as synonymous surface forms. For each synonym string \( s \), we compute its embedding \( \text{Emb}(s) \in \mathbb{R}^d \) as the last hidden state of the model averaged across token positions. The final object embedding is defined as the mean over its synonym embeddings,
\begin{equation}
    \mathbf{v}_g := \frac{1}{|\text{Syn}(g)|} \sum_{s \in \text{Syn}(g)} \text{Emb}(s),
\end{equation}
where \( \text{Syn}(g) \) denotes the set of lemma names associated with \( g \). For example, the synset of \emph{German shepherd} includes \{\emph{German shepherd}, \emph{German shepherd dog}, \emph{German police dog}, \emph{alsatian}\}. Averaging across these variants reduces lexical noise, stabilizes surface-form effects, and yields a more faithful representation of the underlying concept rather than a specific phrasing.

\paragraph{Attribute embedding}
To estimate the semantic direction associated with each attribute, we apply a linear discriminative analysis approach using object embeddings labeled as positive or negative with respect to that attribute. Given a set of positive and negative object embeddings for attribute \( m \), we first compute the class means \( \boldsymbol{\mu}_+ \) and \( \boldsymbol{\mu}_- \), and the class covariance matrices \( \Sigma_+ \) and \( \Sigma_- \), estimated using Ledoit-Wolf shrinkage to improve robustness in high-dimensional settings. We then define the attribute direction \( \bar{\ell}_m \) as the solution to a regularized Fisher separation criterion:
\begin{equation}
\bar{\ell}_m := \left( \Sigma_+ + \Sigma_- + \lambda I \right)^{-1} (\boldsymbol{\mu}_+ - \boldsymbol{\mu}_-),
\end{equation}
where \( \lambda > 0 \) is a small regularization constant to ensure numerical stability.
This procedure yields a direction vector that best separates the positive and negative object clusters in embedding space under a linear discriminant model. The resulting vector \( \bar{\ell}_m \) is used as the attribute direction in all downstream projections and geometric reasoning.

\paragraph{Threshold estimation}
Given an attribute direction \( \bar{\ell}_m \), we determine a threshold \( \tau_m \in \mathbb{R} \) to separate positive and negative objects along this direction. For each object embedding \( \mathbf{v}_g \), we compute its projection onto the normalized direction, and then calculate the threshold as the average of the mean projections of positive and negative object sets:
\begin{equation}
\tau_m := \frac{1}{2} \left( \mathbb{E}_{g \in G_+}[\text{Proj}_m(\mathbf{v}_g)] + \mathbb{E}_{g \in G_-}[\text{Proj}_m(\mathbf{v}_g)] \right),
\end{equation}
where \( G_+ \) and \( G_- \) denote the sets of positive and negative objects for attribute \( m \), respectively.
This threshold minimizes classification error under the assumption of linear separability and equal cost.

\subsection{Existence of half-space model}
We first evaluate whether semantic attributes in LLM embedding space adhere to the half-space model, i.e., whether a single linear direction and threshold can reliably separate objects that possess a given attribute from those that do not. For each attribute \( m \), we estimate a direction \( \mathbf{d}_m \) and threshold \( \tau_m \) using the training set (Section~4.1). Given an object embedding \( \mathbf{v}_g \), we predict whether the object possesses the attribute using a hard decision rule $\hat{m}(g) = \mathbb{I}\left[ \mathbf{v}_g \cdot \mathbf{d}_m \ge \tau_m \right],$
where \( \mathbb{I}[\cdot] \) is the indicator function. We compare this prediction to the ground-truth incidence \( m(g) \in \{0, 1\} \) from the annotated formal context. For each attribute, we compute precision, recall, and F1 score, and report averages over all attributes within each dataset.

\begin{figure}
    \centering
    \includegraphics[width=0.9\linewidth]{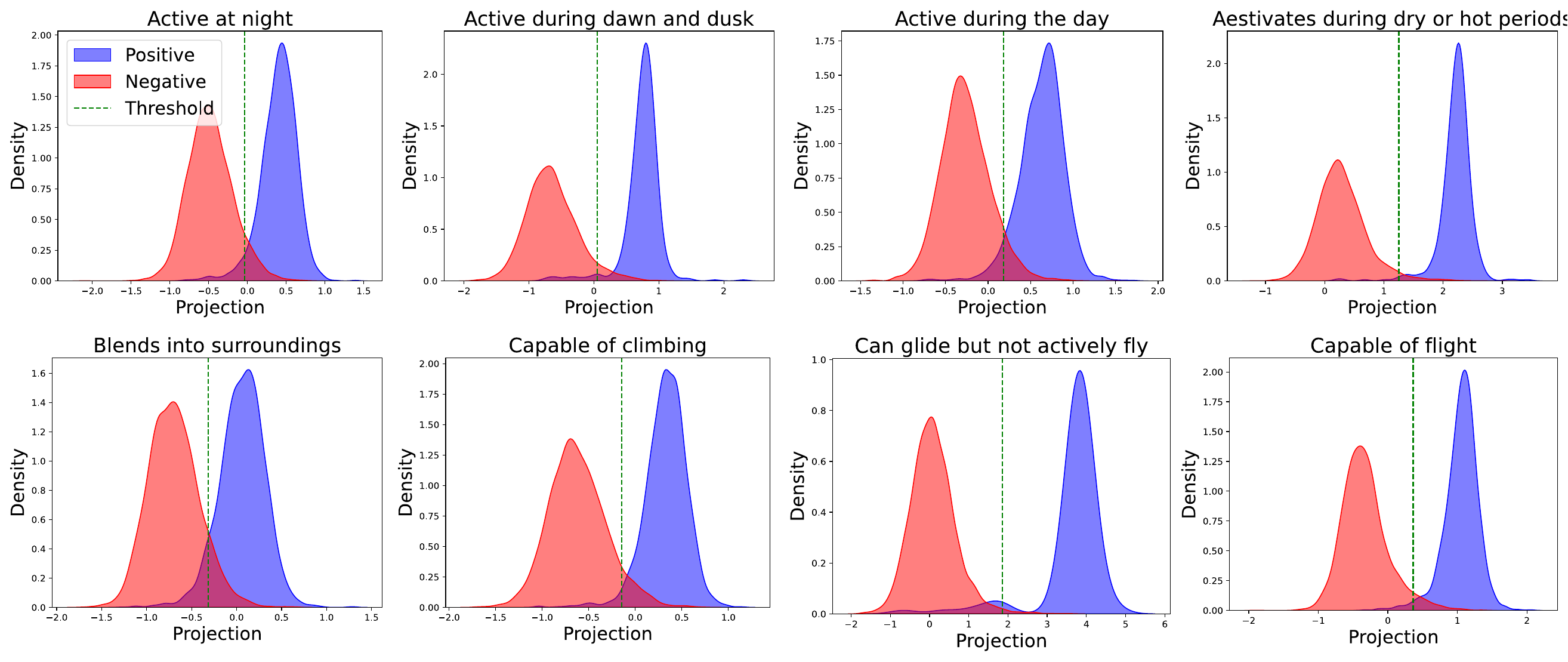}
    \vspace{-0.4cm}
    \caption{Distribution of projection lengths for positive and negative objects onto the directions of the first eight attributes (sorted alphabetically) in the WN-Animal dataset.
    }
    \label{fig:vis_threshold}
     \vspace{-0.1cm}
\end{figure}

\begin{table*}[t!]
\small
\centering
\setlength{\tabcolsep}{0.5em}
\renewcommand{\arraystretch}{1.1}
\caption{
Evaluation of recovering concept–attribute relations (formal context). 
\textbf{Bold} denotes the best across models, \textcolor{gray}{shading} denotes the best within model. All metrics are percentages.
}
\vspace{-.1cm}
\resizebox{\textwidth}{!}{
\begin{tabular}{c c ccc ccc ccc ccc ccc}
\toprule
\multirow{2}{*}{Model} & 
& \multicolumn{3}{c}{\textbf{WN-Animal}} 
& \multicolumn{3}{c}{\textbf{WN-Plant}}
& \multicolumn{3}{c}{\textbf{WN-Food}}
& \multicolumn{3}{c}{\textbf{WN-Event}}
& \multicolumn{3}{c}{\textbf{WN-Cognition}} \\
& & Pre. & Rec. & F1 
  & Pre. & Rec. & F1 
  & Pre. & Rec. & F1
  & Pre. & Rec. & F1
  & Pre. & Rec. & F1 \\
\midrule

\multirow{3}{*}{LLaMA3.1-8B}
& Random & 49.7 & 49.6 & 45.3 & 50.1 & 50.1 & 47.3 & 50.0 & 50.0 & 46.4 & 50.2 & 50.2 & 48.6 & 50.2 & 50.2 & 50.1 \\
& Mean   & 63.7 & 69.3 & 63.7 & 63.7 & 67.2 & 63.3 & 67.2 & 71.4 & 68.1 & 63.7 & 65.1 & 63.9 & 68.4 & 68.4 & 68.4 \\
& Linear & \cellcolor{gray!25}81.4 & \cellcolor{gray!25}84.0 & \cellcolor{gray!25}82.5
          & \cellcolor{gray!25}81.6 & \cellcolor{gray!25}82.4 & \cellcolor{gray!25}82.4
          & \cellcolor{gray!25}\textbf{79.7} & \cellcolor{gray!25}80.6 & \cellcolor{gray!25}\textbf{80.1}
          & \cellcolor{gray!25}\textbf{71.4} & \cellcolor{gray!25}71.6 & \cellcolor{gray!25}\textbf{71.5}
          & \cellcolor{gray!25}75.0 & \cellcolor{gray!25}74.9 & \cellcolor{gray!25}75.0 
           \\
\midrule

\multirow{3}{*}{Gemma-7B}
& Random & 49.8 & 49.7 & 45.3 & 50.1 & 50.1 & 47.3 & 50.1 & 50.1 & 46.3 & 49.4 & 49.3 & 47.8 & 50.1 & 50.1 & 50.1\\
& Mean   & 53.5 & 55.2 & 50.1 & 53.5 & 54.4 & 51.3 & 53.7 & 55.1 & 51.2 & 53.3 & 53.8 & 52.2 & 56.4 & 56.4 & 56.3 \\
& Linear & \cellcolor{gray!25}\textbf{82.0} & \cellcolor{gray!25}\textbf{84.8} & \cellcolor{gray!25}\textbf{83.2}
          & \cellcolor{gray!25}\textbf{82.3} & \cellcolor{gray!25}\textbf{84.3} & \cellcolor{gray!25}\textbf{83.2}
          & \cellcolor{gray!25}79.2 & \cellcolor{gray!25}\textbf{80.9} & \cellcolor{gray!25}80.0
          & \cellcolor{gray!25}71.1 & \cellcolor{gray!25}\textbf{71.7} & \cellcolor{gray!25}71.4
          & \cellcolor{gray!25}\textbf{75.4} & \cellcolor{gray!25}\textbf{75.4} & \cellcolor{gray!25}\textbf{75.4}
          \\
\midrule

\multirow{3}{*}{Mistral-7B}
& Random & 49.4 & 49.2 & 45.0 & 50.3 & 50.4 & 47.5 & 49.5 & 49.5 & 45.5 & 50.5 & 50.6 & 49.0 & 49.4 & 49.4 & 49.3  \\
& Mean   & 62.2 & 67.1 & 62.0 & 61.9 & 64.9 & 61.4 & 62.4 & 66.6 & 62.1 & 57.2 & 58.1 & 56.5 & 63.3 & 63.3 & 63.3\\
& Linear & \cellcolor{gray!25}80.6 & \cellcolor{gray!25}83.4 & \cellcolor{gray!25}81.8
          & \cellcolor{gray!25}80.8 & \cellcolor{gray!25}82.8 & \cellcolor{gray!25}81.7
          & \cellcolor{gray!25}77.6 & \cellcolor{gray!25}78.9 & \cellcolor{gray!25}78.2
          & \cellcolor{gray!25}69.7 & \cellcolor{gray!25}69.8 & \cellcolor{gray!25}69.7
          & \cellcolor{gray!25}74.2 & \cellcolor{gray!25}74.1 & \cellcolor{gray!25}74.1 
          \\
\bottomrule
\end{tabular}
}
\label{tb:tp}
\vspace{-.4cm}
\end{table*}

\paragraph{Results}
Table~\ref{tb:tp} reports results for recovering the formal context across five WordNet domains.
The \emph{Linear} method achieves the best precision, recall, and F1 on every model and domain, consistently above 78\% on physical domains (WN-Animal, WN-Plant, WN-Food) and still strong on the more abstract Event and Cognition domains (above 70\%). This demonstrates that LDA-estimated attribute directions align closely with ground-truth concept–attribute structure, even in semantically diffuse domains. The \emph{Mean} baseline performs moderately (59–68\% F1), capturing coarse centroids but lacking fine-grained discriminative power. The \emph{Random} baseline stays near chance (45–48\%), confirming that attribute recovery is far from trivial and requires meaningful geometric structure.
Across models, Gemma-7B and LLaMA3-8B show the strongest linear separability, with Gemma-7B reaching 83.2\% F1 on Animal and Plant, and Mistral-7B following closely. Overall, these results provide clear evidence that LLM embeddings support the half-space model of concepts proposed in Section~\ref{sec:halfspace-geometry}. Figure~\ref{fig:vis_threshold} further illustrates the separation between positive and negative projection distributions, showing clear margins around the estimated thresholds.

\subsection{Existence of lattice geometry}

To test the \emph{existence of lattice geometry}, we use the concept inclusion score defined in Section~\ref{sec:halfspace-geometry}, where projection profiles over attribute directions are compared via the soft inclusion formula. This allows us to infer subsumptions directly from embedding geometry without access to ground-truth hierarchies. As Table~\ref{tb:concept} shows, the \textsc{Linear} method consistently outperforms centroid-based (\textsc{Mean}) and random baselines across all domains, achieving F1 scores up to 77.1 (LLaMA, WN-Animal) and 75.6 (Gemma, WN-Food). These gains demonstrate that discriminatively estimated attribute directions capture intensional information sufficient to recover hierarchical relations: concepts with stronger projection activation on the attributes of another concept are reliably inferred as subconcepts. The empirical alignment between projection-based subsumption and WordNet ground-truth provides strong evidence that LLM embeddings indeed admit a partial order structure, supporting our hypothesis that lattice geometry is embedded within their representation space.

\begin{table*}[t!]
\vspace{-0.1cm}
\small
\centering
\setlength{\tabcolsep}{0.6em}
\renewcommand{\arraystretch}{1.1}
\caption{
Evaluation of partial order inference from projection-based concept representations. 
\textbf{Bold} denotes the best across models, \textcolor{gray}{shading} denotes the best within model. All metrics are percentages.
}
\resizebox{\columnwidth}{!}{
\begin{tabular}{llccccccccccccccc}
\toprule
\multirow{2}{*}{Model} & 
& \multicolumn{3}{c}{\textbf{WN-Animal}} 
& \multicolumn{3}{c}{\textbf{WN-Plant}}
& \multicolumn{3}{c}{\textbf{WN-Food}}
& \multicolumn{3}{c}{\textbf{WN-Event}}
& \multicolumn{3}{c}{\textbf{WN-Cognition}} \\
& & Pre. & Rec. & F1 
  & Pre. & Rec. & F1 
  & Pre. & Rec. & F1
  & Pre. & Rec. & F1
  & Pre. & Rec. & F1 \\
\midrule
\multirow{3}{*}{LLaMA3.1-8B}
& Random & 52.3 & 43.2 & 47.3 & 51.1 & 49.5 & 47.6 & 25.0 & 50.0 & 33.3 & 50.2 & 50.2 & 50.2 & 49.8 & 49.9 & 49.8 \\
& Mean & 68.4 & 65.0 & 66.7 & 64.2 & 63.3 & 63.8 & 61.5 & 58.5 & 55.7 & 60.3 & 57.9 & 59.1 & 58.2 & 55.4 & 56.8 \\
& Linear & \cellcolor{gray!25}\textbf{75.9} & \cellcolor{gray!25}\textbf{78.3} & \cellcolor{gray!25}\textbf{77.1} 
& \cellcolor{gray!25}71.2 & \cellcolor{gray!25}70.0 & \cellcolor{gray!25}70.4 
& \cellcolor{gray!25}\textbf{78.1} & \cellcolor{gray!25}\textbf{75.9} & \cellcolor{gray!25}75.4 
& \cellcolor{gray!25}\textbf{69.1} & \cellcolor{gray!25}\textbf{67.5} & \cellcolor{gray!25}\textbf{68.3}
& \cellcolor{gray!25}\textbf{70.4} & \cellcolor{gray!25}\textbf{68.9} & \cellcolor{gray!25}\textbf{69.6} \\
\midrule
\multirow{3}{*}{Gemma-7B}
& Random & 54.0 & 51.2 & 50.6 & 52.7 & 50.3 & 49.5 & 53.4 & 50.8 & 39.1 & 49.8 & 50.1 & 49.9 & 49.4 & 49.7 & 49.5 \\
& Mean & 66.2 & 61.0 & 63.4 & 62.4 & 59.8 & 60.9 & 50.7 & 50.7 & 50.6 & 56.3 & 55.0 & 55.6 & 54.1 & 52.8 & 53.4 \\
& Linear  & \cellcolor{gray!25}74.4 & \cellcolor{gray!25}76.0 & \cellcolor{gray!25}75.1 
& \cellcolor{gray!25}\textbf{72.9} & \cellcolor{gray!25}\textbf{70.1} & \cellcolor{gray!25}\textbf{71.4} 
& \cellcolor{gray!25}76.3 & \cellcolor{gray!25}75.8 & \cellcolor{gray!25}\textbf{75.6} 
& \cellcolor{gray!25}66.2 & \cellcolor{gray!25}65.1 & \cellcolor{gray!25}65.6
& \cellcolor{gray!25}67.0 & \cellcolor{gray!25}65.9 & \cellcolor{gray!25}66.4 \\
\midrule
\multirow{3}{*}{Mistral-7B}
& Random & 53.3 & 45.9 & 49.3 & 50.0 & 51.1 & 48.2 & 25.0 & 50.0 & 33.3 & 49.3 & 49.2 & 49.2 & 48.7 & 49.0 & 48.8 \\
& Mean & 66.8 & 63.4 & 64.9 & 61.2 & 60.0 & 60.5 & 56.0 & 55.6 & 54.8 & 55.7 & 54.3 & 55.0 & 53.3 & 51.9 & 52.6 \\
& Linear  & \cellcolor{gray!25}71.7 & \cellcolor{gray!25}72.6 & \cellcolor{gray!25}72.1 
& \cellcolor{gray!25}65.7 & \cellcolor{gray!25}60.6 & \cellcolor{gray!25}57.1 
& \cellcolor{gray!25}68.8 & \cellcolor{gray!25}64.3 & \cellcolor{gray!25}62.0 
& \cellcolor{gray!25}62.9 & \cellcolor{gray!25}60.8 & \cellcolor{gray!25}61.8
& \cellcolor{gray!25}62.0 & \cellcolor{gray!25}60.3 & \cellcolor{gray!25}61.1 \\
\bottomrule
\end{tabular}
}
\label{tb:concept}
\end{table*}

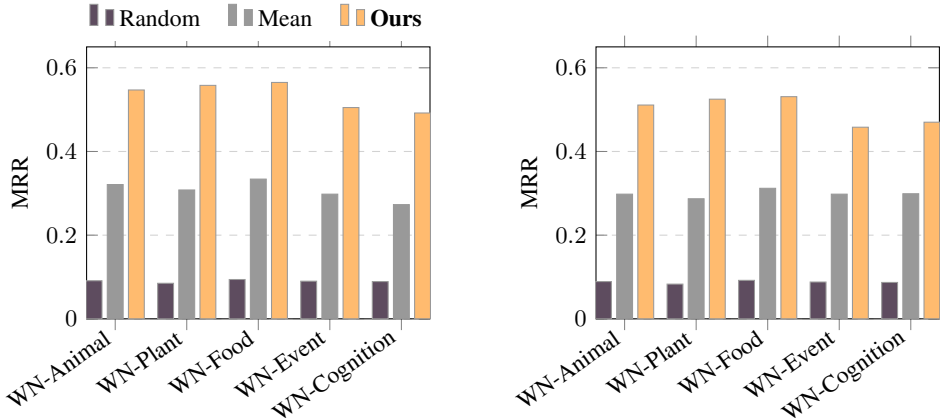
\begin{figure}[t!]
\vspace{-0.1cm}
\centering

\definecolor{dullpurple}{HTML}{5e4c5f}
\definecolor{medgray}{HTML}{999999}
\definecolor{goldish}{HTML}{ffbb6f}

\begin{subfigure}[t]{0.44\textwidth}
\centering
\begin{tikzpicture}
\begin{axis}[
    ybar,
    width=\linewidth,
    height=5.2cm,
    bar width=6pt,
    ymin=0, ymax=0.65,
    ylabel={MRR},
    ylabel style={font=\small},
    tick label style={font=\small},
    symbolic x coords={WN-Animal, WN-Plant, WN-Food, WN-Event, WN-Cognition},
    xtick=data,
    xticklabel style={rotate=35, anchor=east, font=\small},
    ymajorgrids=true,
    grid style={dashed,gray!35},
    legend style={
        at={(0.5,1.02)},
        anchor=south,
        draw=none,
        font=\small,
        legend columns=3,
        /tikz/every even column/.append style={column sep=0.3cm}
    },
]

\addplot+[draw=black!40, fill=dullpurple]
coordinates {
    (WN-Animal,0.091)
    (WN-Plant,0.085)
    (WN-Food,0.094)
    (WN-Event,0.090)
    (WN-Cognition,0.089)
};

\addplot+[draw=black!40, fill=medgray]
coordinates {
    (WN-Animal,0.321)
    (WN-Plant,0.308)
    (WN-Food,0.334)
    (WN-Event,0.298)
    (WN-Cognition,0.273)
};

\addplot+[draw=black!40, fill=goldish]
coordinates {
    (WN-Animal,0.547)
    (WN-Plant,0.558)
    (WN-Food,0.565)
    (WN-Event,0.505)
    (WN-Cognition,0.492)
};

\legend{Random, Mean, \textbf{Ours} }
\end{axis}
\end{tikzpicture}
\end{subfigure}
\hspace{0.03\textwidth}
\begin{subfigure}[t]{0.44\textwidth}
\centering
\begin{tikzpicture}
\begin{axis}[
    ybar,
    width=\linewidth,
    height=5.2cm,
    bar width=6pt,
    ymin=0, ymax=0.65,
    ylabel={MRR},
    ylabel style={font=\small},
    tick label style={font=\small},
    symbolic x coords={WN-Animal, WN-Plant, WN-Food, WN-Event, WN-Cognition},
    xtick=data,
    xticklabel style={rotate=35, anchor=east, font=\small},
    ymajorgrids=true,
    grid style={dashed,gray!35},
]

\addplot+[draw=black!40, fill=dullpurple]
coordinates {
    (WN-Animal,0.089)
    (WN-Plant,0.083)
    (WN-Food,0.092)
    (WN-Event,0.088)
    (WN-Cognition,0.087)
};

\addplot+[draw=black!40, fill=medgray]
coordinates {
    (WN-Animal,0.298)
    (WN-Plant,0.287)
    (WN-Food,0.312)
    (WN-Event,0.298)
    (WN-Cognition,0.299)
};

\addplot+[draw=black!40, fill=goldish]
coordinates {
    (WN-Animal,0.511)
    (WN-Plant,0.525)
    (WN-Food,0.531)
    (WN-Event,0.458)
    (WN-Cognition,0.470)
};

\end{axis}
\end{tikzpicture}
\end{subfigure}
\vspace{-.4cm}
\caption{Quantitative evaluation (MRR) of concept algebra for Meet (\textit{left}) and Join (\textit{right}) operators.}
\vspace{-0.3cm}
\label{fig:quant_concept}
\end{figure}

\paragraph{Quantitative evaluation of concept algebra}
We quantitatively evaluate the concept algebra using the \textit{degree of equality} metrics in Eq.~8--9. For each WordNet domain, we randomly sample 200 concept pairs (A,B) that have at least one shared descendant and one shared ancestor, ensuring well-defined symbolic meets and joins. Gold meets are the lowest shared descendants; gold joins are the least common hypernyms. We compare three approaches: a \textbf{Random} scorer, a \textbf{Mean} baseline that ranks candidates by similarity to the averaged embeddings of (A, B), and our concept-algebra operator. For each predicted meet or join, we rank all candidates and report the MRR of the gold labels. Figure~\ref{fig:quant_concept} shows that our method consistently outperforms both baselines, with the largest improvements on physical domains and slightly lower gains on more abstract ones.

\begin{table}[t!]
\vspace{-0.5cm}
\centering
\setlength{\tabcolsep}{0.6em}
\renewcommand{\arraystretch}{1.1}
\caption{Top-10 terms related to the join and meet of selected WordNet concept pairs.}
\vspace{-0.1cm}
\resizebox{0.9\columnwidth}{!}{
\begin{tabular}{ccp{5.8cm}p{5.8cm}}
\hline
\textbf{A} & \textbf{B} & \textbf{A} $\vee$ \textbf{B} & \textbf{A} $\wedge$ \textbf{B} \\
\hline
dog & wolf &
\begin{tabular}[t]{@{}l@{}}predator, animal, canine, meat-eater,\\ hunter, wild, mammal, quadruped, pet\end{tabular} &
\begin{tabular}[t]{@{}l@{}}dog, hound, puppy, terrier, mutt,\\ beagle, retriever, spaniel, shepherd, pooch\end{tabular} \\
\hline
cat & lion &
\begin{tabular}[t]{@{}l@{}}predator, feline, animal, meat-eater, beast,\\ carnivore, hunter, whiskers, mammal, wild\end{tabular} &
\begin{tabular}[t]{@{}l@{}}cat, kitten, tiger, panther, leopard,\\ tomcat, feline, cheetah, tabby, lynx\end{tabular} \\
\hline
sparrow & robin &
\begin{tabular}[t]{@{}l@{}}avian, songbird, fowl, feathered, finch,\\ beak, chirp, nest, perching, small\end{tabular} &
\begin{tabular}[t]{@{}l@{}}sparrow, robin, songbird, warbler, finch,\\ canary, thrush, chickadee, pipit, titmouse\end{tabular} \\
\hline
horse & zebra &
\begin{tabular}[t]{@{}l@{}}equid, hoofed, animal, ungulate, mammal,\\ quadruped, stallion, beast, herbivore\end{tabular} &
\begin{tabular}[t]{@{}l@{}}horse, pony, stallion, mare, foal,\\ filly, mustang, gelding, thoroughbred, colt\end{tabular} \\
\hline
carrot & parsnip &
\begin{tabular}[t]{@{}l@{}}root, edible, produce, food, green,\\ vegetable, crunchy, fresh, garden, plant\end{tabular} &
\begin{tabular}[t]{@{}l@{}}carrot, parsnip, radish, beet, turnip,\\ tuber, rootcrop, veg, sprout, crop\end{tabular} \\
\hline
eagle & falcon &
\begin{tabular}[t]{@{}l@{}}animal, raptor, bird, predator, creature,\\ wingspan, talon, flyer, sky, sharp-eyed\end{tabular} &
\begin{tabular}[t]{@{}l@{}}eagle, falcon, hawk, osprey, kestrel,\\ buzzard, kite, harrier, condor, bird of prey\end{tabular} \\
\hline
\end{tabular}
}
\label{tab:concept-algebra-eval}
\end{table}

\begin{figure*}[t!]
\vspace{-.2cm}
\centering
\resizebox{0.88\textwidth}{!}{%
\begin{subfigure}[t]{0.51\textwidth}
    \centering
    \includegraphics[width=\linewidth]{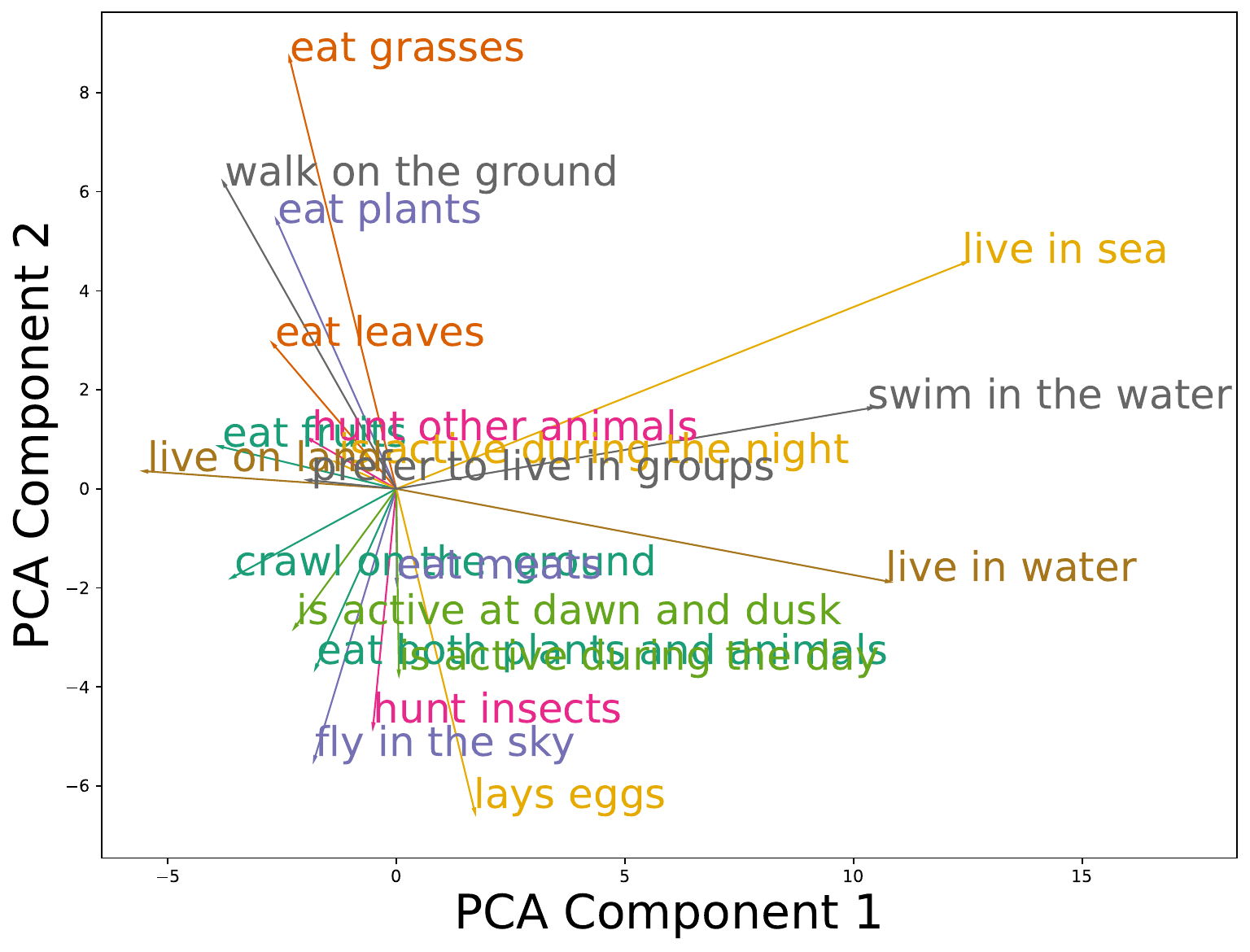}
    \label{fig:pca_attributes}
\end{subfigure}
\hspace{0.05\textwidth}
\begin{subfigure}[t]{0.42\textwidth}
    \centering
    \begin{tikzpicture}
    \begin{axis}[
        ybar,
        width=\linewidth,
        height=5.2cm,
        bar width=6pt,
        ymin=50, ymax=80,
        ylabel={Macro-F1},
        ylabel style={font=\small},
        tick label style={font=\small},
        symbolic x coords={WN-Animal, WN-Plant, WN-Food, WN-Event, WN-Cognition},
        xtick=data,
        xticklabel style={rotate=35, anchor=east, font=\small},
        ymajorgrids=true,
        grid style={dashed,gray!35},
        legend style={
            at={(0.5,1.0)},      
            anchor=south,
            draw=none,
            font=\small,
            legend columns=3,
            /tikz/every even column/.append style={column sep=0.3cm}
        },
    ]

    \definecolor{dullpurple}{HTML}{5e4c5f}
    \definecolor{medgray}{HTML}{999999}
    \definecolor{goldish}{HTML}{ffbb6f}

    \addplot+[draw=black!40, fill=dullpurple] coordinates {
        (WN-Animal,68.2)
        (WN-Plant,66.9)
        (WN-Food,64.1)
        (WN-Event,55.4)
        (WN-Cognition,54.0)
    };

    \addplot+[draw=black!40, fill=medgray] coordinates {
        (WN-Animal,77.1)
        (WN-Plant,70.4)
        (WN-Food,75.4)
        (WN-Event,64.5)
        (WN-Cognition,62.8)
    };

    \addplot+[draw=black!40, fill=goldish] coordinates {
        (WN-Animal,77.8)
        (WN-Plant,71.2)
        (WN-Food,76.5)
        (WN-Event,69.5)
        (WN-Cognition,68.0)
    };

    \legend{LLaMA3B, 8B, 70B}
    \end{axis}
    \end{tikzpicture}
    \label{fig:macroF1_scaling}
\end{subfigure}
} 
\vspace{-.3cm}
\caption{(a) PCA-based visualization of attribute directions in WN-Animal (top 20 most frequent attributes); (b) Performance of LLaMA-3 models of different sizes across WordNet domains.}
\vspace{-.4cm}
\label{fig:side_by_side}
\end{figure*}

\paragraph{Qualitative evaluation of concept algebra} 
We further qualitatively evaluate the concept algebra by randomly selecting concept pairs and inspecting their top-ranked meet and join candidates. As shown in Table~\ref{tab:concept-algebra-eval}, the \emph{join} operation ($A\vee B$) reliably returns higher-level abstractions subsuming both inputs (e.g., \textit{predator} for \textit{dog} and \textit{wolf}, or \textit{avian} for \textit{sparrow} and \textit{robin}), mirroring WordNet hypernyms. Conversely, the \emph{meet} operation ($A\wedge B$) produces refined category intersections such as \textit{pony}/\textit{stallion}/\textit{foal} for \textit{horse} and \textit{zebra}, or \textit{hawk}/\textit{osprey} for \textit{eagle} and \textit{falcon}. These examples illustrate that meet and join behave as geometric analogues of set-theoretic conjunction and abstraction, providing qualitative evidence that LLM embeddings encode a coherent latent lattice structure suitable for compositional reasoning.

\subsection{Additional Analysis}

\paragraph{Physical vs.\ abstract domains} 
Table~\ref{tb:tp}, Table~\ref{tb:concept}, Figure~\ref{fig:quant_concept}, and Figure~\ref{fig:side_by_side}b show a consistent tendency: physical domains (Animal, Plant, Food) achieve relatively better results than abstract domains (Event, Cognition). We conjecture that this is because physical concepts are grounded in concrete, human-perceptual attributes (shape, movement, habitat, function), while abstract concepts rely on more complicated or situational attributes that are less directly encoded in LLMs.

\paragraph{Attribute correlation analysis} 
We visualize the top $20$ most frequent attributes in WN-Animal using PCA (Figure~\ref{fig:side_by_side}a) and report their pairwise correlations in the Appendix. The PCA plot reveals coherent semantic clusters, for example, \textit{eat grasses} and \textit{eat plants} appear close together, while \textit{swim in water} and \textit{live in the sea} form a tight group capturing aquatic behaviors. In contrast, attributes associated with unrelated ecological or behavioral properties (e.g., \textit{lay eggs} vs.\ \textit{live in water}) lie far apart in both the PCA and correlation map. These patterns indicate that the learned attribute directions naturally organize into meaningful semantic subspaces.

\paragraph{Effect of model scaling} 
We also analyze how lattice-geometry performance scales with model size by comparing LLaMA-3 models from 3B to 70B parameters. As shown in Figure~\ref{fig:side_by_side}b, scaling leads to consistent but modest gains on physical domains (WN-Animal, WN-Plant, WN-Food), where even smaller models already capture grounded attributes that structure these categories. In contrast, scaling yields much larger improvements on abstract domains (WN-Event, WN-Cognition), where success depends on representing non-perceptual and relational attributes. This pattern suggests that larger models allocate more capacity to abstract conceptual structure, resulting in more coherent projection geometry in domains less tied to physical properties.

\section{Related Work and Discussion}

\textbf{Conceptual Knowledge in Language Models.} 
Pretrained language models (LMs) have demonstrated remarkable capabilities in capturing conceptual knowledge \citep{DBLP:conf/acl/WuJJXT23,DBLP:conf/acl/LinN22}. A variety of methods have been developed to probe such knowledge, most commonly binary probing classifiers \citep{DBLP:conf/acl/AspillagaMS21,DBLP:conf/emnlp/MichaelBT20} and hierarchical clustering \citep{DBLP:conf/naacl/SajjadDDAK022,DBLP:conf/eacl/HawaslyDD24}, with validation against human-defined ontologies such as WordNet \citep{DBLP:journals/cacm/Miller95}. These approaches primarily provide empirical insights into what LMs capture, but they leave open the question of how LMs learn such conceptual structures. Moreover, it has been argued that LMs can develop novel concepts not strictly aligned with existing ontologies \citep{DBLP:conf/iclr/DalviKAD0S22}, suggesting that ontology-based evaluations may underestimate their conceptual capacity. \citet{xiongtokens} first show the connection of FCA to language models by define concepts in the context of FCA, but their study is limited to masked language models.

\textbf{Linear Representation Geometry of Concepts.}
A line of research focuses on the geometric nature of concept representations. Several studies suggest that concepts in LMs correspond to distinct directions in activation space \citep{DBLP:journals/corr/abs-2209-10652, park2023linear}. This linear hypothesis builds on earlier work in distributional semantics and embeddings \citep{mikolov2013linguistic, pennington2014glove, arora2016latent}, and has since been connected to multiple notions of linearity, including embedding offsets, probing classifiers, and steering vectors \citep{park2024linear}. Empirical studies have also shown the emergence of polytopes in toy models \citep{elhage2022toy}, pointing to a more structured geometry beyond individual directions. Other theoretical work further explores why linear representations arise, linking them to properties of word embedding models \citep{arora2016latent, arora2018linear} and the implicit bias of gradient descent in LLM training \citep{jiang2024origins}.

\section{Conclusion}
We presented a new perspective on the geometry of large language models by linking the linear representation of attributes to Formal Concept Analysis (FCA). Our main contribution is the notion of \emph{lattice geometry}, which shows that when attribute directions are treated as separating half-spaces, the resulting object-attribute relation induces a concept lattice. This framework unifies continuous embedding geometry with symbolic abstraction, and provides formal conditions under which logical structure can be recovered from LLM representations. Empirically, we demonstrated on WordNet sub-hierarchies that (i) many attributes are linearly separable, (ii) subsumption can be predicted directly from projection profiles, and (iii) meet and join operations yield meaningful refinements and generalizations. Together, these findings indicate that LLMs encode not only rich conceptual knowledge but also the algebraic backbone of concept lattices.
Our results suggest that symbolic abstraction is not an incidental byproduct but a structured geometric property of LLMs. This opens up new directions for neuro-symbolic AI: aligning geometric and symbolic reasoning, enhancing interpretability, and enabling controllable manipulation of concepts in embedding space.

\section*{Ethics Statement}
This work does not involve human subjects or sensitive personal data. Our experiments are conducted on publicly available lexical resources (WordNet) and pretrained LLM embeddings. We follow the ICLR Code of Ethics\footnote{\url{https://iclr.cc/public/CodeOfEthics}}. 

\section*{Reproducibility Statement}
We have made efforts to ensure the reproducibility of our results. The formal definitions, theorems, and proofs are fully detailed in the main text and appendix. Experimental settings, dataset construction (WordNet sub-hierarchies and attribute annotation), attribute direction estimation, and evaluation metrics are described in Section~4. Detailed proofs are provided in the Appendix. Code for reproducing the experiments, including dataset processing and evaluation, are uploaded and will be made available in the supplementary materials.

\appendix
\section{Use of Large Language Models (LLMs)}

Large language models (LLMs) were used as a writing  
assistant to polish sentences and improve clarity of exposition. Also, we used LLMs to generate annotations of data used for evaluation. No parts of the research ideation, theoretical development, experimental design, or analysis relied on LLMs. All technical contributions, proofs, and empirical results are the work of the authors. 

\section{Proof of Theorem~\ref{thm:existence-lattice}}
\label{app:proof-existence-lattice}

We restate the theorem for convenience.

\begin{theorem}[Existence of Lattice Geometry]
Let $G$ be a finite set of objects and $M$ a finite set of attributes. Let
$V = \{ \mathbf{v}_g \in \mathbb{R}^d \mid g \in G \}$ be the set of object embeddings and
$\mathcal{D} = \{ \bar{\ell}_m \in \mathbb{R}^d \mid m \in M \}$ the set of attribute directions. Fix $\alpha > 0$.
For each $m \in M$, let $\tau_m \in \mathbb{R}$ and define the soft incidence probability
\[
P_\alpha(m(g)=1) := \sigma\big(\alpha (\mathbf{v}_g \cdot \bar{\ell}_m - \tau_m)\big).
\]
For any confidence level $\delta \in (0,1)$, define the (crisp) incidence relation
\[
I_\delta := \{(g,m) \in G \times M : P_\alpha(m(g)=1) \ge \delta\}.
\]
Let $\mathcal{F}_\delta$ be the set of pairs $(X,Y)$ with $X \subseteq G$ and $Y \subseteq M$ such that
\[
X = Y' := \{ g \in G : \forall m \in Y,\ (g,m) \in I_\delta \}
\quad\text{and}\quad
Y = X' := \{ m \in M : \forall g \in X,\ (g,m) \in I_\delta \}.
\]
Then (i) $\mathcal{F}_\delta$ is closed under the Galois connection, and (ii) $\mathcal{F}_\delta$, ordered by extent inclusion (equivalently, reverse intent inclusion), forms a complete lattice.
\end{theorem}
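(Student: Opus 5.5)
The plan is to observe that, once $\alpha$, the thresholds $\tau_m$ and $\delta$ are fixed, $I_\delta$ is just a crisp binary relation on the finite sets $G \times M$. Hence $(G,M,I_\delta)$ is a formal context in the sense of Definition~\ref{def:formalcontext}, and $\mathcal{F}_\delta$ is exactly its set of formal concepts (Definition~\ref{def:formalconcept}). Since $\sigma$ is strictly increasing, we can write explicitly
\[
(g,m)\in I_\delta \iff \mathbf{v}_g\cdot\bar{\ell}_m \ge \tau_m + \alpha^{-1}\log\tfrac{\delta}{1-\delta}.
\]
So $I_\delta$ is a half-space membership relation. This description is not needed for the lattice claim, but it links the result to the geometry. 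Everything then reduces to the standard FCA argument, which I will reproduce rather than merely cite, so that the proof is self-contained.

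\textbf{Step 1: Galois connection properties.} For $X_1\subseteq X_2\subseteq G$ I will show $X_2'\subseteq X_1'$, and dually for subsets of $M$. I will also show $X\subseteq X''$ and $Y\subseteq Y''$, both of which follow immediately from the definitions. From these I derive $X'=X'''$: antitonicity applied to $X\subseteq X''$ gives $X'''\subseteq X'$, while extensivity applied to $X'$ gives $X'\subseteq X'''$. Thus $X\mapsto X''$ and $Y\mapsto Y''$ are closure operators. Claim (i) follows: for $(X,Y)\in\mathcal{F}_\delta$ we have $X''=Y'=X$ and $Y''=X'=Y$. Moreover, every pair $(X'',X')$ and $(Y',Y'')$ lies in $\mathcal{F}_\delta$, so applying the connection never leaves $\mathcal{F}_\delta$.

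\textbf{Step 2: partial order.} I will check that extent inclusion is a partial order on $\mathcal{F}_\delta$. Antisymmetry holds because the extent determines the intent via $Y=X'$. Equivalence with reverse intent inclusion follows from antitonicity: $X_1\subseteq X_2$ gives $Y_2=X_2'\subseteq X_1'=Y_1$, and conversely.

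\textbf{Step 3: completeness.} For any family $\{(X_j,Y_j)\}_{j\in J}\subseteq\mathcal{F}_\delta$, including the empty family, I will define
\[
\bigwedge_j (X_j,Y_j) = \Big(\bigcap_j X_j,\ \big(\textstyle\bigcup_j Y_j\big)''\Big),
\qquad
\bigvee_j (X_j,Y_j) = \Big(\big(\textstyle\bigcup_j X_j\big)'',\ \bigcap_j Y_j\Big).
\]
The key identity is $(\bigcup_j Y_j)'=\bigcap_j Y_j'=\bigcap_j X_j$, which holds because an object has all attributes in a union exactly when it has all attributes in each member. This shows the meet is a concept, since its intent is $(\bigcap_j X_j)'$. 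It is a lower bound. Any lower bound $(X,Y)$ has $X\subseteq\bigcap_j X_j$, so the meet is the greatest lower bound. The join is handled dually. For the empty family the meet is $(G,G')$ and the join is $(M',M)$, which are the top and bottom elements. Alternatively, I can note that the finite set $\mathcal{F}_\delta$ is nonempty and has all meets, which already forces it to be a complete lattice.

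The main obstacle is not mathematical depth. It is bookkeeping: the statement writes $I_\delta$ both on embeddings and on indices $(g,m)$, and distinct objects could share an embedding. I will work with indices throughout, as in the restated version, to avoid these identification issues, and I will treat the empty index family carefully. The geometric hypotheses (the $\tau_m$ and $\alpha$) play no role beyond making $I_\delta$ well defined.
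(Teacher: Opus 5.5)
Your proposal is correct and takes essentially the same route as the paper: read $I_\delta$ as a crisp formal context, then run the standard FCA argument (antitone Galois connection, double-prime closure operators, meet $\bigl(\bigcap_j X_j, (\bigcap_j X_j)'\bigr)$ and join $\bigl((\bigcup_j X_j)'', \bigcap_j Y_j\bigr)$). Your identity $(\bigcup_j Y_j)' = \bigcap_j X_j$ also fills a small hole, since it shows explicitly that $\bigcap_j X_j$ is closed, which the paper's meet argument asserts but never checks.
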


\paragraph{Plan of the proof.}
The probabilistic scoring is only used to induce the crisp relation $I_\delta$. Once $I_\delta$ is fixed, the statement becomes a standard FCA result. For completeness, we give a self-contained proof.

\subsection{Galois connection induced by $I_\delta$}

\begin{lemma}[Antitone Galois connection]\label{lem:galois}
Define maps $(\cdot)' : 2^G \to 2^M$ and $(\cdot)' : 2^M \to 2^G$ by
\[
A' := \{ m \in M : \forall g \in A,\ (g,m) \in I_\delta \},\qquad
B' := \{ g \in G : \forall m \in B,\ (g,m) \in I_\delta \}.
\]
Then for all $A \subseteq G$ and $B \subseteq M$, we have
\[
A \subseteq B' \quad \Longleftrightarrow \quad B \subseteq A'.
\]
Consequently, both primes are antitone: if $A_1 \subseteq A_2$ then $A_2' \subseteq A_1'$, and if $B_1 \subseteq B_2$ then $B_2' \subseteq B_1'$.
\end{lemma}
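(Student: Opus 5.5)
The equivalence follows by unfolding both sides to the same universally quantified statement about the fixed crisp relation $I_\delta$. Once $I_\delta$ is fixed, the soft incidence function, $\alpha$ and the thresholds $\tau_m$ play no further role. I would first show that
\[
A \subseteq B' \iff \forall g \in A,\ \forall m \in B:\ (g,m) \in I_\delta .
\]
The argument is direct: $A \subseteq B'$ means every $g\in A$ lies in $B'$, and by definition of $B'$ this means $(g,m)\in I_\delta$ for every $m\in B$. Symmetrically, I would show
\[
B \subseteq A' \iff \forall m \in B,\ \forall g \in A:\ (g,m) \in I_\delta .
\]
Since the two universal quantifiers over the finite sets $A$ and $B$ commute, the right-hand sides coincide, which gives $A\subseteq B' \iff B\subseteq A'$.

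For antitonicity, I would derive it from the equivalence rather than re-proving it by hand. The key auxiliary facts are the extensive properties $A\subseteq A''$ and $B\subseteq B''$. To get $A\subseteq A''$, apply the equivalence with $B:=A'$: the trivially true inclusion $A'\subseteq A'$ yields $A\subseteq (A')'$. Now suppose $A_1\subseteq A_2$. Then $A_1\subseteq A_2\subseteq (A_2')'$. Applying the equivalence to the pair $(A_1, A_2')$ turns $A_1\subseteq (A_2')'$ into $A_2'\subseteq A_1'$, as required. The argument for $B_1\subseteq B_2 \Rightarrow B_2'\subseteq B_1'$ is the mirror image, using $B\subseteq B''$.

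A direct route is also available: if $m\in A_2'$, then $m$ is incident with every $g\in A_2\supseteq A_1$, so $m\in A_1'$. I might give this one-line argument as a sanity check.

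There is no real obstacle here. The only point needing care is notation: the theorem statement writes incidence pairs as $(\mathbf{v}_g,\bar\ell_m)$, while the appendix uses $(g,m)$. I would note that these are identified via the indexing $g\mapsto\mathbf{v}_g$, $m\mapsto\bar\ell_m$. Under this identification, $I_\delta$ is just an arbitrary binary relation on $G\times M$, so the lemma is the standard FCA Galois connection.
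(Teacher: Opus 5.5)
Your proposal is correct and matches the paper's proof: you unfold both $A\subseteq B'$ and $B\subseteq A'$ to the same condition, that $(g,m)\in I_\delta$ for all $g\in A$ and $m\in B$. The paper only asserts antitonicity as a consequence. Your derivation fills in that step correctly, either via $A\subseteq A''$ (the same extensivity argument the paper later uses in Lemma~\ref{lem:closure}) or via the direct one-line check.
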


\begin{proof}
($\Rightarrow$) If $A \subseteq B'$, then for any $m \in B$ and any $g \in A$ we have $(g,m)\in I_\delta$, hence $m \in A'$ and thus $B \subseteq A'$.
($\Leftarrow$) If $B \subseteq A'$, then for any $g \in A$ and $m \in B$ we have $(g,m)\in I_\delta$, i.e., $g \in B'$, hence $A \subseteq B'$.
\end{proof}

\begin{lemma}[Closure operators]\label{lem:closure}
The double-prime operators $\phi_G(A):=A''$ on $2^G$ and $\phi_M(B):=B''$ on $2^M$ are closures: for all $A \subseteq G$, (i) $A \subseteq A''$ (extensivity),
(ii) $A \subseteq B \Rightarrow A'' \subseteq B''$ (monotonicity), and (iii) $(A'')''=A''$ (idempotence). Analogous properties hold on $2^M$.
\end{lemma}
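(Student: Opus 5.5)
The plan is to derive all three properties from the antitone Galois connection of Lemma~\ref{lem:galois}, used only through two consequences: the equivalence $A \subseteq B' \Leftrightarrow B \subseteq A'$, and the antitonicity of both prime maps. I will argue on $2^G$; the argument on $2^M$ is the same with the roles of objects and attributes swapped, because Lemma~\ref{lem:galois} is symmetric in $G$ and $M$.

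\emph{Extensivity.} Apply Lemma~\ref{lem:galois} with $B := A'$. The trivial inclusion $A' \subseteq A'$ is the right-hand side of the equivalence, so we obtain $A \subseteq (A')' = A''$.

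\emph{Monotonicity.} Suppose $A \subseteq B$ with $A, B \subseteq G$. Apply antitonicity once to get $B' \subseteq A'$ in $2^M$. Apply it a second time to get $A'' \subseteq B''$.

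\emph{Idempotence.} This is the only step needing more than one line, so I first prove the auxiliary identity $A''' = A'$ for every $A \subseteq G$.
\begin{itemize}
\item For the inclusion $A' \subseteq A'''$, apply extensivity on the $M$ side to the set $A' \subseteq M$; this gives $A' \subseteq (A')'' = A'''$.
\item For the reverse inclusion, start from extensivity on the $G$ side, $A \subseteq A''$. Antitonicity of the prime $2^G \to 2^M$ then yields $A''' = (A'')' \subseteq A'$.
\end{itemize}
Applying the prime $2^M \to 2^G$ to both sides of $A''' = A'$ gives $A'''' = A''$, that is, $(A'')'' = A''$.

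Symmetric reasoning gives the identity $B''' = B'$ and the same three properties on $2^M$.

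I expect no real obstacle here. The only care needed is ordering: extensivity on both sides must be established before idempotence, since the proof of $A''' = A'$ uses both. The thresholded soft incidence $P_\alpha$ never enters, because everything depends only on the crisp relation $I_\delta$.
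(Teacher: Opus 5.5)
Your proof is correct and matches the paper's argument for extensivity (instantiate the Galois-connection equivalence at $B=A'$) and monotonicity (apply antitonicity twice). For idempotence, the paper writes only the chain $(A'')''=((A')')''=(A')'=A''$. Its middle equality is exactly the triple-prime identity $X'''=X'$ at $X=A'$, which the paper never justifies (read literally, it restates the claim). Your explicit proof of $A'''=A'$, from extensivity on both sides plus antitonicity, supplies precisely that missing step, so your version is the more complete one.
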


\begin{proof}
Extensivity: $A \subseteq A''$ follows from Lemma~\ref{lem:galois} with $B=A'$: $A \subseteq (A')'=A''$.
Monotonicity: $A \subseteq B \Rightarrow B' \subseteq A'$ (antitone), hence $A''=(A')' \subseteq (B')'=B''$.
Idempotence: $(A'')''=((A')')''=(A')'=A''$. The $2^M$ case is symmetric.
\end{proof}

\subsection{Formal concepts as closed pairs}

\begin{lemma}[Characterization of concepts]\label{lem:concepts}
A pair $(X,Y)$ with $X \subseteq G$ and $Y \subseteq M$ satisfies $Y=X'$ and $X=Y'$ iff $X$ and $Y$ are closed, i.e., $X=X''$ and $Y=Y''$.
\end{lemma}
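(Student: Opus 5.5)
The plan is to prove the two directions separately. Both rest only on Lemma~\ref{lem:galois} and Lemma~\ref{lem:closure}, together with the identity $A'''=A'$ for all $A\subseteq G$ (and dually $B'''=B'$ for $B\subseteq M$). This triple-prime identity is the one auxiliary fact we need, and we establish it first.

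\emph{The identity $A'''=A'$.} By extensivity in Lemma~\ref{lem:closure} applied on $2^M$ to the set $A'$, we get $A'\subseteq (A')''=A'''$. For the reverse inclusion, extensivity on $2^G$ gives $A\subseteq A''$. Since the prime map is antitone (Lemma~\ref{lem:galois}), this yields $(A'')'\subseteq A'$, that is, $A'''\subseteq A'$. The dual statement $B'''=B'$ follows by the same argument with the roles of $G$ and $M$ exchanged.

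\emph{($\Rightarrow$) Concepts are closed pairs.} Assume $Y=X'$ and $X=Y'$. Substituting the first equation into the second gives $X=(X')'=X''$. Substituting the second into the first gives $Y=(Y')'=Y''$. No auxiliary fact is needed here.

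\emph{($\Leftarrow$) Closed pairs are concepts.} Assume $X=X''$ and $Y=Y''$. We must show $Y=X'$ and $X=Y'$, and this is where the statement needs care. Closedness of $X$ and $Y$ individually does not relate $X$ to $Y$. For example, $X=G$ and $Y=\emptyset$ are both closed whenever no attribute is shared by all objects, yet $Y'=G$ while $X'$ need not be empty in general. So the converse, read literally, is false, and I would state it in its intended form: if $X$ is closed, then $(X,X')$ is a concept, and if $Y$ is closed, then $(Y',Y)$ is a concept. With that reading the proof is short. Setting $Y:=X'$, we have $Y'=X''=X$ by closedness of $X$. Moreover $Y''=X'''=X'=Y$, so $Y$ is closed as well. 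The dual case is symmetric.

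\emph{Expected obstacle.} The computations are routine. The real difficulty is the logical gap in the ``if'' direction, and the proof must be stated so that closedness of one component determines the other; I would add this correction explicitly. This form is exactly what the subsequent lattice argument needs. Concepts then correspond bijectively to closed extents, via $X\mapsto (X,X')$. Closed extents are stable under arbitrary intersection, since $\bigcap_i X_i=(\bigcup_i X_i')'$. Hence they form a complete lattice, with meet $\bigl(\bigcap_i X_i,(\bigcup_i Y_i)''\bigr)$.
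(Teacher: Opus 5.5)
Your argument for the lemma is correct and is the paper's proof. For ($\Rightarrow$) you substitute each equation into the other; for ($\Leftarrow$) you set $Y:=X'$ and use $Y'=X''=X$. You are also right that the ``if'' direction is false as literally stated. The paper avoids this the same way you do, by silently redefining $Y:=X'$, so your corrected version is what the paper actually proves. The triple-prime identity is not needed here: once $(X,X')$ is a concept, closedness of $X'$ already follows from ($\Rightarrow$).

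Your counterexample, however, fails. The set $Y=\emptyset$ is closed exactly when $\emptyset''=G'=\emptyset$. Under that hypothesis $X'=G'=\emptyset=Y$ and $Y'=G=X$, so $(G,\emptyset)$ is a genuine concept. More generally, for closed $X$ and $Y$ the conditions $X=Y'$ and $Y=X'$ are equivalent: from $X=Y'$ you get $X'=Y''=Y$, and conversely. So any candidate with $Y'=X$, as yours has, is automatically a concept.

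A valid counterexample is $(X,Y)=(G,M)$. Both sets are always closed, since extensivity gives $G\subseteq G''\subseteq G$ and $M\subseteq M''\subseteq M$. Yet $G'=M$ holds only when $I_\delta=G\times M$.
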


\begin{proof}
($\Rightarrow$) If $Y=X'$, then $X=(X')'=X''$; if $X=Y'$, then $Y=(Y')'=Y''$.
($\Leftarrow$) If $X=X''$, set $Y:=X'$ so $X=(X')'=Y'$. The converse from $Y=Y''$ is symmetric.
\end{proof}

\subsection{Lattice structure and completeness}

\begin{proposition}[Partial order]\label{prop:order}
For formal concepts $(X_1,Y_1)$ and $(X_2,Y_2)$, define
\[
(X_1,Y_1) \le (X_2,Y_2) \;:\!\iff\; X_1 \subseteq X_2 \quad (\text{equivalently } Y_2 \subseteq Y_1).
\]
Then $\le$ is a partial order on $\mathcal{F}_\delta$.
\end{proposition}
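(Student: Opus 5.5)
The plan is to verify the three partial-order axioms directly, after first checking that the relation is well defined. The paper states the order via extents and "equivalently" via reverse intent inclusion, so I will begin by proving that equivalence for formal concepts in $\mathcal{F}_\delta$.

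Take $(X_1,Y_1),(X_2,Y_2)\in\mathcal{F}_\delta$ and suppose $X_1\subseteq X_2$. Lemma~\ref{lem:galois} says the prime map is antitone, so $X_2'\subseteq X_1'$. Since $Y_i=X_i'$, this gives $Y_2\subseteq Y_1$.

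Conversely, suppose $Y_2\subseteq Y_1$. Antitonicity on $2^M$ gives $Y_1'\subseteq Y_2'$. Since $X_i=Y_i'$, this is $X_1\subseteq X_2$. Thus the two descriptions of $\le$ coincide, and I may work with extent inclusion throughout.

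With that settled, the axioms follow from the fact that $\subseteq$ is a partial order on $2^G$.

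\textbf{Reflexivity.} $X\subseteq X$ holds trivially.

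\textbf{Transitivity.} If $X_1\subseteq X_2$ and $X_2\subseteq X_3$, then $X_1\subseteq X_3$.

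\textbf{Antisymmetry.} This is the only step with any content, because the order lives on pairs rather than on extents. Suppose $(X_1,Y_1)\le(X_2,Y_2)$ and $(X_2,Y_2)\le(X_1,Y_1)$. Then $X_1=X_2$. Because each pair is a formal concept, its intent is determined by its extent through $Y_i=X_i'$ (Lemma~\ref{lem:concepts}). Hence $Y_1=X_1'=X_2'=Y_2$, and the two pairs are equal. Equivalently, the map $(X,Y)\mapsto X$ is injective on $\mathcal{F}_\delta$, so $\le$ is the pullback of the inclusion order on $2^G$ along an injection.

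I expect no real obstacle. The only point needing care is antisymmetry, namely showing that equal extents force equal intents. The concept condition $Y=X'$ handles this immediately, so the whole proof should be a few lines.
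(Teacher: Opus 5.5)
Your proof is correct and follows the same route as the paper: reflexivity and transitivity come from $\subseteq$, and antisymmetry holds because equal extents force equal intents via $Y_i = X_i'$. Your extra check that extent inclusion is equivalent to reverse intent inclusion, using antitonicity of the prime maps together with $X_i = Y_i'$, is a small but welcome addition, since the paper only asserts this equivalence.
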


\begin{proof}
Reflexivity/transitivity follow from $\subseteq$. Antisymmetry: $X_1 \subseteq X_2$ and $X_2 \subseteq X_1$ imply $X_1=X_2$, hence $Y_1=X_1'=X_2'=Y_2$.
\end{proof}

\begin{proposition}[Meets and joins]\label{prop:meets-joins}
For any family $\{(X_i,Y_i)\}_{i\in I}$ of formal concepts,
\[
\bigwedge_{i} (X_i,Y_i) = \Big(\bigcap_{i} X_i,\ \big(\bigcap_{i} X_i\big)'\Big),\qquad
\bigvee_{i} (X_i,Y_i) = \Big(\big(\bigcup_{i} X_i\big)'',\ \bigcap_{i} Y_i\Big),
\]
and both pairs are formal concepts.
\end{proposition}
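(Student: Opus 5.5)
For the meet, I set $X:=\bigcap_i X_i$ and $Y:=X'$ and check three things: that $(X,Y)$ is a formal concept, that it is a lower bound, and that it is the greatest lower bound. By Lemma~\ref{lem:concepts}, the first amounts to showing $X=X''$. Extensivity (Lemma~\ref{lem:closure}) gives $X\subseteq X''$. For the reverse inclusion, $X\subseteq X_i$ for every $i$, so monotonicity gives $X''\subseteq X_i''=X_i$, using that each $X_i$ is closed. Hence $X''\subseteq\bigcap_i X_i=X$. It is a lower bound because $X\subseteq X_i$ for all $i$. It is greatest because any concept $(Z,W)$ with $Z\subseteq X_i$ for all $i$ has $Z\subseteq\bigcap_i X_i=X$. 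The order is by extents (Proposition~\ref{prop:order}), so this finishes the meet.

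For the join, the main step is the identity $\big(\bigcup_i X_i\big)'=\bigcap_i X_i'$. Both sides are the set of attributes $m$ with $(g,m)\in I_\delta$ for every $g\in\bigcup_i X_i$, so the identity is immediate from the definition of the prime. Since $X_i'=Y_i$, we get $\big(\bigcup_i X_i\big)'=\bigcap_i Y_i=:Y$. Put $X:=\big(\bigcup_i X_i\big)''=Y'$. Then $X'=\big(\bigcup_i X_i\big)'''=\big(\bigcup_i X_i\big)'=Y$. Here the triple-prime identity $A'''=A'$ comes from Lemma~\ref{lem:closure}: extensivity applied to $A'$ gives $A'\subseteq A'''$, and applying the antitone prime to $A\subseteq A''$ gives $A'''\subseteq A'$. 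So $X=Y'$ and $Y=X'$, and $(X,Y)$ is a formal concept.

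The pair is an upper bound because $X_i\subseteq\bigcup_j X_j\subseteq\big(\bigcup_j X_j\big)''$ by extensivity. It is the least upper bound: if $(Z,W)$ is a concept with $X_i\subseteq Z$ for all $i$, then $\bigcup_i X_i\subseteq Z$, and monotonicity of the closure gives $X\subseteq Z''=Z$. The equivalent description via reverse intent inclusion then follows from antitonicity (Lemma~\ref{lem:galois}).

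The empty family needs a separate check, and I expect it to be the main point of care rather than a real obstacle. With the convention $\bigcap_{\emptyset}X_i=G$, the meet is $(G,G')$, the top concept; $G=G''$ holds because $G''\subseteq G$ trivially. The empty join is $(\emptyset'',M)$, the bottom concept, with $M=\emptyset'$. Both are covered by the general argument once these conventions are fixed. Finally, $G$ and $M$ are finite, so $\mathcal{F}_\delta$ is finite and nonempty; having all meets and joins then gives completeness of the lattice, which is part (ii) of Theorem~\ref{thm:existence-lattice}.
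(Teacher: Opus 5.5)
Your proof is correct and takes the same route as the paper's. The meet is $\bigcap_i X_i$ with its derived intent, the join is the closure $\big(\bigcup_i X_i\big)''$, and the bound arguments (extent inclusion for the meet, monotonicity of the closure for the join) match; you also fill in two points the paper leaves implicit: that $\bigcap_i X_i$ is closed (via $X''\subseteq X_i''=X_i$), and that the join's intent $\big(\bigcup_i X_i\big)'''$ equals $\bigcap_i Y_i$ (via $\big(\bigcup_i X_i\big)'=\bigcap_i X_i'$ and $A'''=A'$).
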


\begin{proof}
Meet: Let $X_*:=\cap_i X_i$. Then $(X_*,X_*')\le (X_i,Y_i)$ for all $i$. If $(Z,W)\le (X_i,Y_i)$ for all $i$, then $Z\subseteq X_*$, so $(Z,W)\le (X_*,X_*')$.

Join: Let $\tilde X:=(\cup_i X_i)''$ (closed by Lemma~\ref{lem:closure}). Then $(X_i,Y_i)\le (\tilde X,\tilde X')$ for all $i$. If $(X_i,Y_i)\le (Z,W)$ for all $i$, then $\cup_i X_i \subseteq Z$, hence $\tilde X \subseteq Z''=Z$ and $(\tilde X,\tilde X')\le (Z,W)$.
\end{proof}

\begin{corollary}[Completeness]\label{cor:completeness}
$(\mathcal{F}_\delta,\le)$ is a complete lattice: every subset has both meet and join as above.
\end{corollary}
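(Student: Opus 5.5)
The corollary follows almost directly from Proposition~\ref{prop:meets-joins} once two small points are checked: the formulas there apply to \emph{every} subset of $\mathcal{F}_\delta$, including the empty family, and the pairs they produce are genuine elements of $\mathcal{F}_\delta$. I would proceed in three steps.

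\textbf{Step 1 (membership).} Let $\{(X_i,Y_i)\}_{i\in I}\subseteq\mathcal{F}_\delta$ be an arbitrary family. I would show that the meet and join candidates are formal concepts by checking closedness and then invoking Lemma~\ref{lem:concepts}.

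For the meet, set $X_*=\bigcap_i X_i$. The $X_i$ are extents, and by the Galois connection of Lemma~\ref{lem:galois},
\[
\bigcap_i Y_i' = \Big(\bigcup_i Y_i\Big)'.
\]
Hence $X_*=(\bigcup_i Y_i)'$, which is a prime set. A prime set is closed by Lemma~\ref{lem:closure}, since $B'''=B'$. So $(X_*,X_*')$ is a concept.

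For the join, $\tilde X=(\bigcup_i X_i)''$ is closed by idempotence. I would also verify that its intent is $\bigcap_i Y_i$. Since $A'''=A'$, we have $\tilde X'=(\bigcup_i X_i)'$, and by the same Galois identity this equals $\bigcap_i X_i'=\bigcap_i Y_i$.

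\textbf{Step 2 (universal properties).} Greatest-lower-bound and least-upper-bound properties are already argued in Proposition~\ref{prop:meets-joins}. They rest on three facts: intersection is the infimum in $2^G$, every lower bound's extent lies in $X_*$, and monotonicity of $''$ together with $Z=Z''$ for any upper bound $(Z,W)$.

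\textbf{Step 3 (empty family).} This is the only real subtlety. With the conventions $\bigcap_{\emptyset}X_i=G$ and $\bigcup_{\emptyset}X_i=\emptyset$, the meet of the empty family is the top element $(G,G')$. The join is the bottom element $(\emptyset'',M)$, noting $\emptyset'=M$ and $\bigcap_\emptyset Y_i=M$. Both are concepts by Step~1. In particular $\mathcal{F}_\delta$ is nonempty, so a complete lattice is obtained.

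As an optional cross-check, since $G$ is finite, $\mathcal{F}_\delta$ is a finite poset with a top element and binary meets, which already suffices for completeness. The main obstacle is only bookkeeping: making sure the empty-family conventions and the identity $\bigcap_i Y_i' = (\bigcup_i Y_i)'$ are stated explicitly rather than taken for granted.
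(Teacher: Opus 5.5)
Your proposal is correct and follows the paper's route: the paper gives no separate argument for the corollary and simply reads it off from Proposition~\ref{prop:meets-joins}. Your additions are details the paper leaves implicit, and you handle them correctly: showing that $\bigcap_i X_i=\bigl(\bigcup_i Y_i\bigr)'$ is closed (which the paper's meet argument never checks), verifying that the join's intent is $\bigcap_i Y_i$, and treating the empty family.
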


\subsection{Discussion of the role of $\alpha$ and $\delta$}
The parameter $\alpha>0$ only rescales the logits inside $\sigma$ and does not affect order-theoretic conclusions, which depend solely on $I_\delta$. The confidence $\delta\in(0,1)$ controls the incidence monotonically: if $\delta_1 \le \delta_2$ then $I_{\delta_2} \subseteq I_{\delta_1}$. Thus increasing $\delta$ removes incidences and yields a different (typically coarser) concept lattice. The theorem holds for each fixed $\delta$.

\section{Proof of Proposition~\ref{prop:canonicalization}}

\begin{proof}[Proof of Proposition~\ref{prop:canonicalization}]
Let $D\in\mathbb{R}^{k\times d}$ have $i$-th row $\,\mathbf{d}_i^\top$ and suppose there exists $\mathbf{c}\in\mathbb{R}^d$ with $D\mathbf{c}=\boldsymbol{\tau}$. Then, for each attribute $i\in\{1,\dots,k\}$,
\[
\mathbf{d}_i^\top \mathbf{c} \;=\; (D\mathbf{c})_i \;=\; \tau_i.
\]
Hence, for any object embedding $\mathbf{v}_g$,
\[
(\mathbf{v}_g-\mathbf{c})\cdot \mathbf{d}_i
\;=\; \mathbf{v}_g\cdot \mathbf{d}_i - \mathbf{c}\cdot \mathbf{d}_i
\;=\; \mathbf{v}_g\cdot \mathbf{d}_i - \tau_i.
\]
Applying the sigmoid with sharpness $\alpha>0$ yields
\[
\sigma\!\left(\alpha\big(\mathbf{v}_g\cdot \mathbf{d}_i - \tau_i\big)\right)
\;=\; \sigma\!\left(\alpha\,(\mathbf{v}_g-\mathbf{c})\cdot \mathbf{d}_i\right),
\]
for all $g$ and $i$, proving that $P_\alpha(m_i(g)=1)$ is invariant under the global shift $\mathbf{v}_g\mapsto \mathbf{v}_g-\mathbf{c}$. Therefore, the soft-incidence model admits a canonical, origin-passing form without changing any probabilities or the induced incidence relation for a fixed threshold on $P_\alpha$.
\end{proof}

\paragraph{Remarks.}
(i) The condition $D\mathbf{c}=\boldsymbol{\tau}$ is equivalent to $\boldsymbol{\tau}\in \mathrm{rowspace}(D)$. If the rows $\{\mathbf{d}_i^\top\}_{i=1}^k$ are linearly independent (i.e., $\mathrm{rank}(D)=k$) and $k\le d$, then $\mathrm{rowspace}(D)=\mathbb{R}^k$, so a (generally non-unique) solution $\mathbf{c}$ exists for any $\boldsymbol{\tau}$.  
(ii) When solutions exist, they are unique up to addition of any vector in $\ker(D)$; all such choices yield the same invariance because $D(\mathbf{c}+\mathbf{z})=\boldsymbol{\tau}$ for any $\mathbf{z}\in\ker(D)$.

\section{Additional Discussion and Analysis}

\begin{table}[t]
\centering
\caption{Statistics of the five WordNet-derived datasets.}
\setlength{\tabcolsep}{1.0em}
\renewcommand{\arraystretch}{1.2}
\begin{tabular}{lccccc}
\hline
\textbf{Category} & \textbf{WN-Animal} & \textbf{WN-Plant} & \textbf{WN-Food} & \textbf{WN-Event} & \textbf{WN-Cognition} \\
\hline
\#Objects     & 7342 & 7704 & 2506 & 1009 & 2802 \\
\#Attributes  & 100  & 145  & 184  & 60   & 107  \\
\#Hypernyms   & 7473 & 8051 & 2628 & 1079 & 3003 \\
\hline
\end{tabular}
\label{tb:datasets}
\end{table}

\paragraph{Connection to neuro-symbolic methods and interpretability}
The soft inclusion and meet/join operators introduced in our framework provide differentiable analogues of logical subsumption and concept composition. These scores make it possible to evaluate whether an LLM’s embedding geometry behaves logically within a given domain, offering a practical interpretability tool for assessing whether the model has learned coherent concept structure. In addition, because the operators are fully differentiable, they can be used as logic-guided regularizers that softly enforce symbolic constraints during training. Examples include subsumption, attribute satisfaction, and type consistency. This creates a natural interface with neuro-symbolic learning, where symbolic rules influence geometric representations through regularization.

Our findings also suggest a natural extension of linear steering to multi-attribute steering \citep{oozeer-etal-2025-beyond}. Whereas standard steering modifies representations along a single attribute direction, lattice geometry enables logical steering. Enforcing two attributes corresponds to moving the embedding toward their meet. Promoting abstraction corresponds to steering toward their join. Negation corresponds to crossing the relevant threshold hyperplane. These operators provide a simple, differentiable, and interpretable way to compose and control conceptual constraints inside LLMs.

Although our experiments focus on WordNet-style taxonomic domains, the framework is broadly applicable to any setting where concepts can be represented by an object–attribute matrix and attribute inclusion induces a partial order. Examples include semantic fields, where words serve as objects and semantic components serve as attributes, and verb classification or speech-act semantics, where verbs are represented by propositional or pragmatic features \citep{priss2005linguistic}. In a linguistic setting where constructions or grammatical patterns are annotated with syntactic or morphological features, the resulting feature matrix can be treated as an FCA context and modeled through the proposed half-space formulation. In general, any domain that provides attribute-labeled objects admits the lattice geometry developed in this work without requiring modification.

\begin{figure}
    \centering
    \includegraphics[width=0.8\linewidth]{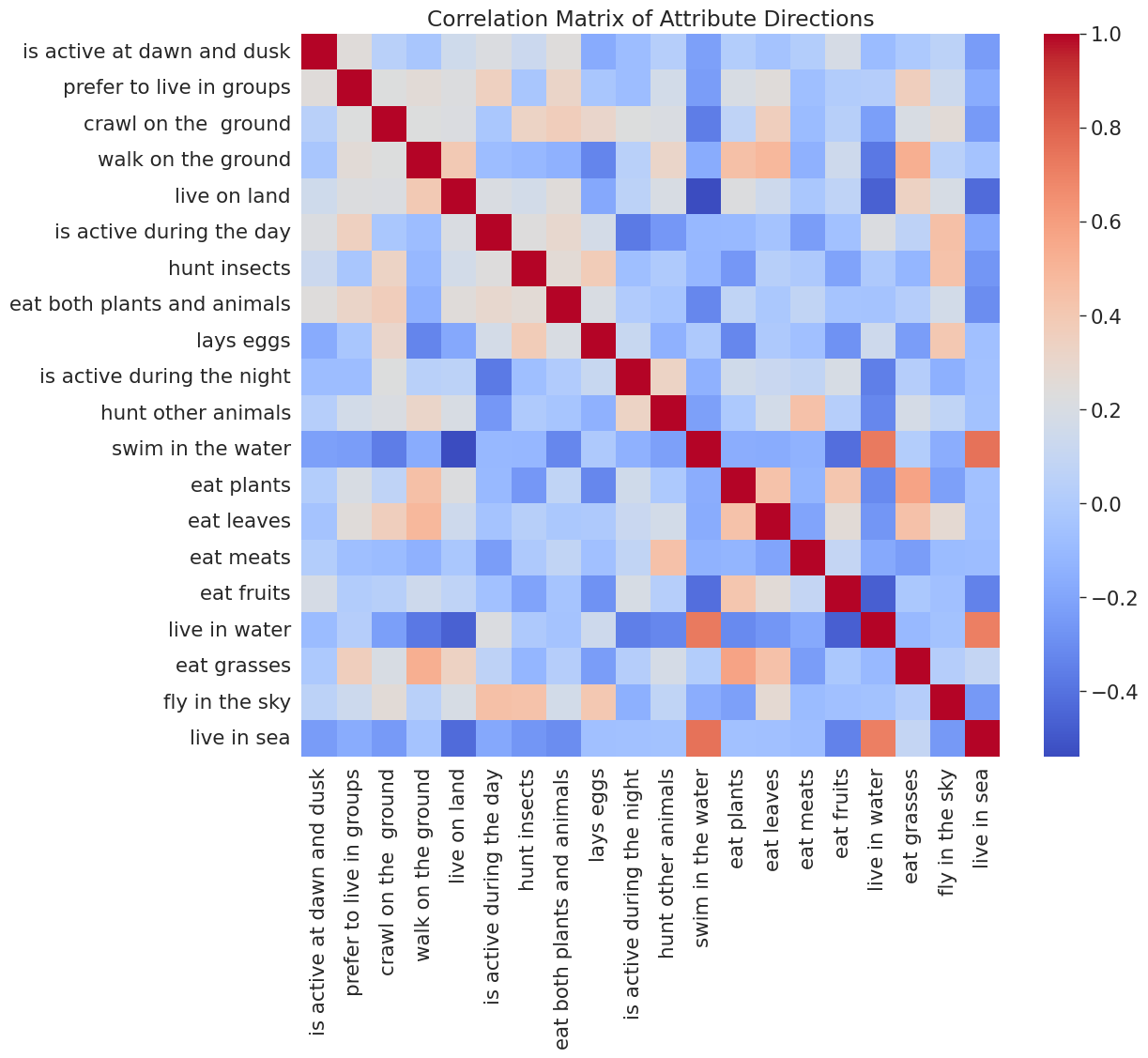}
    \caption{
    Correlation analysis of attribute directions in the WordNet-Animal dataset.}
    \label{fig:cor}
\end{figure}

\paragraph{Limitation} 
Our framework assumes that attributes correspond to approximately linear directions, an assumption supported by prior work on the Linear Representation Hypothesis. However, as noted by \cite{engels2025not}, concept types with inherently non-linear, cyclic, or continuously varying structure (for example, months of the year) may violate this linear assumption. Another limitation is that our experimental study focuses on general domains. It is an interesting direction to explore how well the lattice geometry hypothesis holds in domain-specific areas such as biomedical ontologies.

\bibliography{iclr2026_conference}
\bibliographystyle{iclr2026_conference}

\end{document}